\newcommand\AddLabel[1]{%
  \refstepcounter{equation}
  (\theequation)
  \label{#1}
}
\newcolumntype{M}{>{\hfil$\displaystyle}X<{$\hfil}} 
\newcolumntype{L}{>{\collectcell\AddLabel}r<{\endcollectcell}}
\newcommand{\constraint}[1]{\mbox{\sc{#1}}}
\newcommand{\partition}{\constraint{Partition}}
\newcommand{\group}{\constraint{BinSeq}}
\newcommand{\avalue}{j}
\newcommand{\ceil}[1]{\ensuremath{\left\lceil #1 \right\rceil}}
\newcommand{\floor}[1]{\ensuremath{\left\lfloor #1 \right\rfloor}}
\newcommand{\nval}{\ensuremath{P}}
\newcommand{\pn}{n}
\newcommand{\pmin}{\underline{M}}
\newcommand{\pmax}{\overline{M}}
\newcommand{\prange}{\underline{\overline{M}}}
\newcommand{\prmin}{R}
\newcommand{\psumsquares}{S}
\newcommand{\sm}{\mathit{SM}}
\newcommand{\rr}{\mathit{RR}}
\newcommand{\mmid}{\mathit{MID}}
\newcommand{\smin}{\mathit{SMIN}}
\newcommand{\omin}{o_{\mathit{min}}}
\newcommand{\omax}{o_{\mathit{max}}}
\newcommand{\gn}{\mathit{n}}
\newcommand{\gng}{\mathit{G}}
\newcommand{\gnv}{\mathit{N_1}}
\newcommand{\gmin}{\underline{G}}
\newcommand{\gmax}{\overline{G}}
\newcommand{\dmin}{\underline{D}}
\newcommand{\dmax}{\overline{D}}
\newcommand{\grange}{\overline{\underline{G}}}
\newcommand{\drange}{\overline{\underline{D}}}
\newcommand{\gsumsquares}{\mathit{GS}}
\newcommand{\dsumsquares}{\mathit{DS}}
\newcommand{\sequence}{\mathcal{X}}
\newcommand{\Selection}{\mbox{Selection}}
\newcommand{\Enumerate}{\mbox{Enumerate}}
\newcommand{\ComputeAllSolutions}{\mbox{ComputeAllSolutions}}
\newcommand{\Select}{\mbox{Select}}
\newcommand{\EnumerateAllSolutions}{\mbox{EnumerateAllSolutions}}
\newcommand{\Labeling}{\mbox{Labeling}}
\newcommand{\SelectOne}{\mbox{SelectOne}}
\newcommand{\Dicho}{\mbox{Dicho}}
\newcommand{\candidates}{\mathit{Bounds}}
\newcommand{\ctr}{\mathit{Ctr}}
\newcommand{\charvars}{\mathit{Feat\!Vars}}
\newcommand{\vars}{\sequence}
\newcommand{\sols}{\mathit{Sols}}
\newcommand{\bound}{\mathit{Bound}}
\newcommand{\isol}{\mathit{ISol}}
\newcommand{\finished}{\mathit{Finished}}
\newcommand{\sol}{\mathit{Sol}}
\newcommand{\nback}{\mathit{NBack}}
\newcommand{\prevselected}{\mathit{PrevBound}}
\newcommand{\newselected}{\mathit{Selected}'}
\newcommand{\newcandidates}{\mathit{Bounds}'}
\newcommand{\restselected}{\mathit{RestSelected}}
\newcommand{\toplevel}{\mathit{Top}}
\newcommand{\len}{\mathit{Len}}
\newcommand{\midd}{\mathit{Mid}}
\newcommand{\prefix}{\mathit{Prefix}}
\newcommand{\suffix}{\mathit{Suffix}}
\newcommand{\missing}{\mathit{MissingBound}}
\newcommand{\sortedsols}{\mathit{SortedSols}}
\newcommand{\allsols}{\mathit{AllSols}}
\newcommand{\jsol}{\mathit{JSol}}
\newcommand{\back}{\mathit{Back}}
\newcommand{\mycolor}[1]{\color{#1}}
\renewcommand{\mycolor}[1]{}
\newcommand{\jomodif}[1]{{\mycolor{red}#1}}
\newcommand{\cgdeleted}[1]{\st{#1}} 
\renewcommand{\cgdeleted}[1]{}    
\DeclareRobustCommand{\hlpink}[1]{{\sethlcolor{pink}\hl{#1}}}
\newcommand{\rebuttal}[2]{\noindent\hlpink{\textbf{Reviewer: } #1} \hl{\textbf{Our response: } #2}}
\renewcommand{\rebuttal}[2]{} 
\begin{document}

\title{Incremental Selection of\\ Most-Filtering Conjectures and\\ Proofs of the Selected Conjectures}

\titlerunning{Acquiring and Selecting Implied Constraints}
\author{J.~Cheukam Ngouonou\textsuperscript{\rm 1,2,4}, R.~Gindullin\textsuperscript{\rm 1,2}, C.-G. Quimper\textsuperscript{\rm 4},\\ N. Beldiceanu\textsuperscript{\rm 1,2} and R. Douence\textsuperscript{\rm 1,2,3}}
\authorrunning{Cheukam Ngouonou et al.}
\institute{    \textsuperscript{\rm 1}IMT Atlantique, Nantes, France
    ~~\textsuperscript{\rm 2}LS2N, Nantes, France\\
    \textsuperscript{\rm 3}INRIA, Nantes, France
    ~~\textsuperscript{\rm 4}Université Laval, Qu{\'e}bec, Canada\\
\email{ramiz.gindullin@it.uu.se, nicolas.beldiceanu@imt-atlantique.fr, jovial.cheukam-ngouonou.1@ulaval.ca, Remi.Douence@imt-atlantique.fr, Claude-Guy.Quimper@ift.ulaval.ca}}
\maketitle            

\begin{abstract}
We present an improved incremental selection algorithm of the selection algorithm presented in~\cite{CheukamNgouonouGQBD25}
and prove all the selected conjectures.
\end{abstract}

\section{Introduction}

In Section~\ref{sec:incremental_selection_alg}, we describe an incremental algorithm for selecting the most-filtering bound conjectures.
This incremental algorithm and the speedup it offers were mentioned in~\cite{CheukamNgouonouGQBD25}, but were not described for space reasons.
In Section~\ref{sec:partition} and in Section~\ref{sec:group} we respectively prove the selected conjectures for the \partition\ and the \group\ constraints that were not proved in~\cite{CheukamNgouonouGQBD25}.

\section{An Incremental Selection Algorithm}\label{sec:incremental_selection_alg}

We present an incremental version of the selection algorithm described in~\cite{CheukamNgouonouGQBD25}.
Unlike our original algorithms, we do not post the candidate bound constraints from scratch during each step of the dichotomic search;
nor do we scan the set of solutions from scratch when looking for the next candidate bound constraint to select.

In the following, we assume that if a constraint fails while being posted, it will be removed by the solver.
Given the set of constraints already posted $\mathcal{C}$, the function $\Labeling(\charvars,\vars)$ returns a triplet $(\nback,\finished,\sol)$, where $\sol$ is the first solution found that satisfies all the constraints of $\mathcal{C}$ by assigning the variables of $\charvars$ and $\vars$ from left to right, assuming that the variables are fixed by scanning their domains by increasing values:
\begin{itemize}[label=--]
\item
$\nback$ is the number of backtracks to find a solution or prove that there is no solution,
\item
$\finished$ is set to \textsc{true} if no solution could be found, and to \textsc{false} otherwise,
\item
$\sol$ is meaningless if no solution was found.
\end{itemize}

The main selection algorithm, Alg.\hspace*{1pt}(\ref{alg::IncSelection}), has the following arguments:
\begin{itemize}[label=--]
\item $\ctr$ the constraint associated with a combinatorial object, e.g. the \partition\ or the \group\ constraints.
\item $\charvars$ the set of feature variables of constraint $\ctr$, e.g. variables $\nval$, $\pmin$, $\pmax$, $\prange$ and $\psumsquares$ for \partition\ or variables $\gnv,\!\gng$, $\gmin$, $\gmax$, $\grange$, $\gsumsquares$, $\dmin$, $\dmax$, $\drange$, $\dsumsquares$ for \group.
\item $\vars$ the array of variables $[X_1,X_2,\ldots,X_\pn]$ of constraints $\ctr$.
\item $\candidates$ the list of candidate bound constraints found by the Bound Seeker for constraint $\ctr$.
\end{itemize}

From a set of candidate bound constraints $\candidates$, Alg.~(\ref{alg::IncSelection}) returns a list of selected constraints.
Line~1 of Alg.~(\ref{alg::IncSelection}) posts the constraint $\ctr$ only once during the whole selection process, calls Alg.~(\ref{alg::IncComputeAllSolutions}) in line~2 to compute all solutions using all candidate bound constraints, and finally selects in line~3 a subset of candidate bound constraints that lead to the computation of each solution of the constraint $\ctr$ without increasing the number of backtracks.
\begin{algorithm}
\caption{$\Selection(\ctr,\charvars,\vars,\candidates)$\label{alg::IncSelection}}
post constraint $\ctr(\charvars,\vars)$\;
$\sols\gets\ComputeAllSolutions(\charvars,\vars,\candidates)$\;
\Return $\Select(\sols,\charvars,\vars,\candidates,\{\})$\;
\end{algorithm}

Alg.~(\ref{alg::IncComputeAllSolutions}) and~(\ref{alg::IncEnumerateAllSolutions}) compute all solutions of the feature variables $\charvars$ wrt constraint $\ctr$ in ascending lexicographic order on the $\charvars$ variables and records each solution with the number of backtracks to find it.
Since we will need to calculate a specific solution later on in the selection process, independently from the other solutions, we proceed as follows:
to obtain the $i$-th solution wrt the lexicographic order of $\charvars$, we compute the smallest lexicographic solution that is strictly greater than the $(i-1)$-th solution already known. In lines 3--5 of Alg.~(\ref{alg::IncEnumerateAllSolutions}) we post the constraint $\charvars >_\mathit{\ell\!ex} \sol$ stating that $\charvars$ is lexicographically strictly greater than $\sol$ before computing the next smallest lexicographic solution.
Finally, in line~3 of Alg.~(\ref{alg::IncComputeAllSolutions}), we sort all solutions by increasing number of backtracks, as the selection process will use this order to reduce the time spent generating solutions.

\begin{algorithm}
\caption{$\ComputeAllSolutions(\charvars,\vars,\candidates)$\label{alg::IncComputeAllSolutions}}
$\forall\,\bound\in\candidates:~\text{post bound constraint}~\bound~\text{on}~\charvars$\;
$\sols\gets\EnumerateAllSolutions(\charvars,\vars)$\;
$\sortedsols\gets\,\text{sort}~\sols~\text{by increasing number of backstracks}$\;
remove all posted bound constraints\;
\Return $\sortedsols$\;
\end{algorithm}

\begin{algorithm}
\caption{$\EnumerateAllSolutions(\charvars,\vars)$\label{alg::IncEnumerateAllSolutions}}
$\isol\gets 0;$ $\sols\gets\emptyset$\;
\While{\textsc{true}}{
    \If{$\isol>0~\land~\text{post constraint}~\charvars >_\mathit{\ell\!ex} \sol~\text{fails}$}{
        \Return $\sols\cup\{(\isol,0,[])\}$\;
    }
    $(\nback,\finished,\sol)\gets\Labeling(\charvars,\vars)$\;
    $\sols\gets\sols\cup\{(\isol,\nback,\sol)\}$\;
    $\textbf{if}~\isol>0~\textbf{then}~\text{remove lexicographic constraint that was posted}$\;
    $\textbf{if}~\finished~\textbf{then return}~\sols~\textbf{else}~\isol\gets\isol+1$\;
}
\end{algorithm}

Alg.~\ref{alg::IncSelect} is a recursive selection algorithm that selects a subset of bound constraints that does not increase the number of backtracks to find each solution.
At each step of the recursion, Alg.~\ref{alg::IncSelect} successively:
\begin{itemize}[label=--]
\item
Post the previously selected bound constraint $\prevselected$ (lines 1--2), as all previously selected bound constraints must be posted when searching for the next bound constraint to select.
Note that unlike the selection algorithm described in~\cite{CheukamNgouonouGQBD25}, each selected bound constraint is only posted once during the entire selection process.
\item
Select the next bound constraint $\newselected$ from the current set of candidate bound constraints $\candidates$, and create the new reduced set of candidate bound constraints $\newcandidates$ (line~3).
\item
If both, we could select a bound constraint and we still have some candidate bound constraints (line~4), we recursively call Alg.~\ref{alg::IncSelect} to select the next bound constraints to keep (line~5).
\end{itemize}

\vspace{-0.5cm}
\begin{algorithm}
\caption{$\Select(\sols,\charvars,\vars,\candidates,\prevselected)$\label{alg::IncSelect}}
\If{$\prevselected\neq\emptyset$}{
    post previous selected bound constraint $\prevselected$ on $\charvars$\;
}
$(\newselected,\newcandidates)\gets\SelectOne({\textsc{true}},\hspace*{-2pt}\sols,\hspace*{-2pt}\sols,\hspace*{-2pt}\charvars,\hspace*{-2pt}\vars,\hspace*{-2pt}\candidates)$\;
\uIf{$\newselected\neq\emptyset\land\newcandidates\neq\emptyset$}{
    $\restselected\gets\Select(\sols,\charvars,\vars,\newcandidates,\newselected)$\;
    \Return $\newselected\cup\restselected$\,\;
}\lElse{
    \Return $\newselected$
}
\end{algorithm}

\vspace{-0.5cm}
Alg.~\ref{alg::IncSelectOne} recursively selects the next bound constraint from the list of candidate bound constraints $\candidates$.
At each stage of the selection process, we incrementally post a suffix of the list of candidate bound constraints $\candidates$, i.e. each candidate bound constraint is only posted at most once during the search for the next candidate.
\begin{itemize}[label=--]
\item
Lines 1--5 split the list of candidate bound constraints $\candidates$ in a prefix and suffix part in an uneven way where the suffix is smaller than the prefix part.
The current way to split $\candidates$ was determined experimentally by testing different manners of partitioning on different examples.
We prefer to incrementally add a limited number of bound constraints so that we end up in a situation where we do not have enough constraints and this increases the number of backtracks needed to find a solution.
Otherwise, adding too many bound constraints would result in not increasing the number of backtracks, which would have the effect of scanning all remaining solutions in $\sols$ and deleting the added constraints in order to post a smaller set of constraints.
As a result, the same solution from $\sols$ would be generated several times, which can be mitigated by limiting the number of constraints we added.
\item
Line~6 performs a dichotomic search on the suffix and prefix parts of the candidate list to return a selected candidate $\newselected$ and the remaining list of candidate constraints $\newcandidates$.
\item
When we are at the top-level call of $\SelectOne$, i.e. when $\SelectOne$ is called in line~3 of Alg.~(\ref{alg::IncSelect}), lines 7--8 of Alg.~(\ref{alg::IncSelectOne}) remove any bound constraints posted within the call to Alg.~(\ref{alg::IncDicho}) on line~6 of Alg.~(\ref{alg::IncSelect}) to prepare for the next call to $\SelectOne$ from $\Select$.
\end{itemize}

\vspace{-0.5cm}
\begin{algorithm}
\caption{$\SelectOne(\toplevel,\sols,\allsols,\charvars,\vars,\candidates)$\label{alg::IncSelectOne}}
$\len\gets|\candidates|$\;
$\textbf{if}~\len>200~\textbf{then}~\midd\gets\len-100~
\textbf{else}\,\textbf{if}~\len<3~\textbf{then}~\midd\gets\lfloor\frac{\len+1}{2}\rfloor$\\
$\textbf{else}~\midd\gets\lfloor\frac{2\cdot\len+2}{3}\rfloor$\;
$\prefix\gets~\text{first}~\midd~\text{elements of}~\candidates$\;
$\suffix\gets~\text{last}~\len-\midd~\text{elements of}~\candidates$\;
$(\newselected,\hspace*{-1pt}\newcandidates)\hspace*{-1pt}\gets\hspace*{-1pt}\Dicho(\sols,\hspace*{-1pt}\allsols,\hspace*{-1pt}\charvars,\hspace*{-1pt}\vars,\hspace*{-1pt}\len,\hspace*{-1pt}\prefix,\hspace*{-1pt}\suffix)$\;
\If{$\toplevel$}{
    remove all bound constraints posted from the current call to $\Dicho$\;
}
\Return $(\newselected,\newcandidates)$\;
\end{algorithm}

\vspace{-0.5cm}
Alg.~(\ref{alg::IncDicho}) performs a dichotomic search wrt the suffix and prefix parts of the candidate list.
\begin{itemize}[label=--]
\item Line~1 posts all candidate bound constraints from $\suffix$.
\item For each solution $\sols$, line~2 computes the number of backtracks to obtain the next solution, and stops when the number of backtracks increases i.e.~$\missing=\textsc{true}$, or when the list of solutions is fully explored i.e.~$\missing=\textsc{false}$.
The set $\sols'$ corresponds to the set $\sols'$ from which we removed all the solutions leading to the same number of backtracks.
\item 
\begin{enumerate}
\item
If the number of backtracks increases and we can still add several bound constraints to the constraints to keep (see line~3), we continue the dichotomic search by using the bound constraints of the $\prefix$ to select the next bound to keep (see lines 4--5).
\item
If we can only add one bound constraint (see line~7), we return that bound constraint if the number of backtracks increases (see \textbf{then} part in line~8); otherwise we do not select a bound constraint (see \textbf{else} part in line~8).
\item
Otherwise, since adding all bound constraints of $\suffix$ (see line~1) was sufficient to avoid increasing the number of backtracks when searching for all solutions, we look for the next candidate bound constraint to select in the suffix (see line~10).
\end{enumerate}
\end{itemize}

\vspace{-0.5cm}
\begin{algorithm}
\caption{$\Dicho(\sols,\allsols,\charvars,\vars,\len,\prefix,\suffix)$\label{alg::IncDicho}}
$\forall\,\bound\in\suffix:~\text{post bound constraint}~\bound~\text{on}~\charvars$\;
$(\sols',\missing)\gets\Enumerate(\sols,\allsols,\charvars,\vars)$\;
\uIf{$\missing\land\len>1$}{
    $(\newselected,\hspace*{-2pt}\candidates)\hspace*{-2pt}\gets\hspace*{-2pt}\SelectOne(\textsc{false},\hspace*{-1pt}\sols',\allsols,\hspace*{-1pt}\charvars,\hspace*{-1pt}\vars,\hspace*{-1pt}\prefix)$\;
    \Return $(\newselected,\candidates\cup\suffix)$\;
    }
    remove all bound constraints posted on line~1\;
    \uIf{$\len=1$}{
        $\textbf{if}~\missing~\textbf{then}~\Return (\prefix,\emptyset)~\textbf{else}~\Return (\emptyset,\prefix)$
    } \Else{
        \Return $\SelectOne(\textsc{false},\sols,\allsols,\charvars,\vars,\suffix)$
    }
\end{algorithm}

\vspace{-0.5cm}
Alg.~\ref{alg::IncEnumerate} iteratively computes the next solution of each solution in $\sols$, considering the current set of posted bound constraints, until backtracking increases wrt the number of backtracks obtained using all bound constraints, or until no more solutions exist.
The \emph{next solution} of the $\isol$-th solution $\sol$ of the set $\sols$ is the smallest lexicographic solution strictly greater than $\sol$ (see lines~1--4).
If the current solution $\sol$ does not increase the number of backtracks (line~7), it is removed from the list of solutions $\sols$ to be checked (line~8), and the algorithm continues to check the remaining solutions $\sols'$ (line~10); otherwise, if backtracking increases, the check is terminated (line~12).

\vspace{-0.5cm}
\begin{algorithm}
\caption{$\Enumerate(\sols,\allsols,\charvars,\vars)$\label{alg::IncEnumerate}}

let $(\isol,-,\sol)$ be the first element of $\sols;~\jsol\gets \isol+1$\;
let $(\jsol,\nback,-)$ be the $\jsol$-th element of $\allsols;~\finished\gets\textsc{false}$\;
\If{$\isol>0\land~\text{post constraint}~\charvars >_\mathit{\ell\!ex} \sol~\text{fails}$}{
        $\nback\gets0;$ $\finished\gets\textsc{true}$\;
}
$\textbf{if}~\neg\finished~\textbf{then}~(\back,\finished,-)\gets\Labeling(\charvars,\vars)$\;
$\textbf{if}~\isol>0~\textbf{then}~\text{remove lexicographic constraint that was posted}$\;
\uIf{$\back=\nback$}{
    $\sols'\gets\sols-\{(I,Bi,Si)\}$\;
    \uIf{$\sols'\neq\emptyset$}{
        \Return $\Enumerate(\sols',\allsols,\charvars,\vars)$\;
    } \lElse {
        \Return $(\sols',\textsc{false})$
    }
} \lElse {
    \Return $(\sols,\textsc{true})$
}
\end{algorithm}

\section{Proofs for the Conjectures of the \partition\ Constraint}\label{sec:partition}

We borrow the definition of the \partition\ constraint from ~\cite{CheukamNgouonouGQBD25}.

\begin{definition}
$\partition([X_1,X_2,\ldots,X_\pn], \nval, \pmin, \pmax, \prange, \psumsquares)$ is satisfied iff
\begin{align}
  \nval        & = |\{X_1,X_2,\ldots,X_n\}|\hspace*{0.8cm}
  \psumsquares  \hspace*{3pt} = \sum_{\avalue \in \sequence} |\{i \mid X_i = \avalue\}|^2 \\
  \pmin        & = \min_{\avalue \in \sequence} |\{i \mid X_i = \avalue\}|\hspace*{1cm}
  \pmax          = \max_{\avalue \in \sequence} |\{i \mid X_i = \avalue\}|\hspace*{1cm}
  \prange      & = \pmax - \pmin
\end{align}
\end{definition}

We prove the following three selected conjectures, which were found by the Bound Seeker.

\begin{itemize}
\item
An upper bound on $\psumsquares$ and two distinct upper bounds on $\prange$:
  \begin{align}
  \psumsquares & \leq \mmid^2 + \sm \cdot \rr + \smin\label{eqn::upper_bound_sum_squares}
  \end{align} with: 
  
  \begin{tikzpicture}
\begin{scope}
\draw node[right,text width=7cm] {
\begin{equation}
\mmid  = \left\{\begin{array}{ll}
    \pmin + (\prmin\mbox{ \bf mod }\prange) & \mbox{if }  \prange > 0\\
    \pmin  & \mbox{otherwise}
    \end{array}\right. \label{mid}
\end{equation}
};
\end{scope}
\begin{scope}[xshift=6.5cm]
\draw node[right,text width=5cm] {
\begin{equation}
\prmin  = \pn - \nval \cdot \pmin \label{eqn::first_secondary_feature_upper_bound}
\end{equation}
};
\end{scope}
\end{tikzpicture}

\begin{tikzpicture}
\begin{scope}
\draw node[right,text width=5.5cm] {
\begin{equation}
 \rr  =  \left\{\begin{array}{ll}
    \left\lfloor\frac{\prmin}{\prange}\right\rfloor & \mbox{if }  \prange > 0\\
    0 & \mbox{otherwise}
    \end{array}\right.
\end{equation}
};
\end{scope}
\begin{scope}[xshift=6cm,yshift=0.4cm]
\draw node[right,text width=5cm] {
\begin{equation}
\sm  = \pmax^2-\pmin^2
\end{equation}
};
\end{scope}
\begin{scope}[xshift=6cm,yshift=-0.4cm]
\draw node[right,text width=5cm] {
\begin{equation}
\smin  = \pmin^2\cdot(\nval - 1)\label{eqn::last_secondary_feature_upper_bound}
\end{equation}
};
\end{scope}
\end{tikzpicture} 
\item
  Two distinct upper bounds on $\prange$:
  \begin{align}
  \prange & \leq \pn - \nval\cdot \pmin\label{eqn::upper_bound_range1}\\
  \prange & \leq \min(\nval\cdot\pmax-\pn,\pmax-1)\label{eqn::upper_bound_range2}
  \end{align}
\end{itemize}

We provide the proofs of correctness for three selected bounds~\eqref{eqn::upper_bound_sum_squares}, \eqref{eqn::upper_bound_range1}, and~\eqref{eqn::upper_bound_range2}.

To prove the conjecture \eqref{eqn::upper_bound_sum_squares}, we first prove the five following lemmas.

\begin{lemma}\label{lem:lemma1}

If there exist at least two partitions whose sizes are strictly between $\pmin$ and $\pmax$, in other words, if their sizes are $\pmin + r_1$ and $\pmin + r_2$ such that $1\leq r_1\leq r_2 < \prange$, then $\psumsquares$ is not maximal.
\end{lemma}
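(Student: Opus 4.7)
The plan is to use a standard smoothing (majorization) argument: with the features $\pn$, $\nval$, $\pmin$ and $\pmax$ held fixed, the sum of squares of part sizes is strictly increased whenever one can push two intermediate sizes further apart, because $x \mapsto x^2$ is strictly convex. Concretely, I would pick the two parts of sizes $a = \pmin + r_1$ and $b = \pmin + r_2$ with $1 \le r_1 \le r_2 < \prange$ provided by the hypothesis, and set $\delta = \min(r_1,\, \prange - r_2)$; since $r_1 \ge 1$ and $r_2 \le \prange - 1$, we have $\delta \ge 1$. I would then construct a new assignment $X'$ by selecting $\delta$ variables currently taking the value $v_a$ whose count is $a$ and reassigning them to the value $v_b$ whose count is $b$, so that the updated counts become $a - \delta$ and $b + \delta$. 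By the choice of $\delta$, both of these counts still lie in $[\pmin, \pmax]$.

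Next I would check that $X'$ satisfies the \partition\ constraint with exactly the same features $(\pn, \nval, \pmin, \pmax, \prange)$ as the original partition. The total number of variables is unchanged, the set of used values is unchanged (no count drops to $0$ and no new value appears), and the untouched parts still realise both $\pmin$ and $\pmax$; combined with the fact that the two modified counts stay in $[\pmin, \pmax]$, this shows that $\nval$, $\pmin$, $\pmax$ and hence $\prange$ are all preserved.

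Finally, the sum-of-squares comparison reduces to the one-line computation
\[
(a-\delta)^2 + (b+\delta)^2 - a^2 - b^2 \;=\; 2\delta^2 + 2\delta(b-a) \;\ge\; 2\delta^2 \;>\; 0,
\]
using $b \ge a$ and $\delta \ge 1$. Since every other part is left untouched, $\psumsquares(X') > \psumsquares$, which proves that the original $\psumsquares$ is not maximal among partitions sharing the same feature values. The only point requiring care — which I expect to be the main subtlety of the proof — is justifying that $\delta$ can be chosen simultaneously strictly positive and small enough that neither modified count escapes $[\pmin, \pmax]$; the definition $\delta = \min(r_1, \prange - r_2)$ handles both requirements uniformly thanks to the strict inequalities in the hypothesis.
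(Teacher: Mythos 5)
Your proof is correct and follows essentially the same exchange argument as the paper: you transfer elements from the part of size $\pmin+r_1$ to the part of size $\pmin+r_2$ and use convexity of $x\mapsto x^2$ to get a strictly larger $\psumsquares$ with all features preserved, the only (inessential) difference being that you move $\delta=\min(r_1,\prange-r_2)$ elements where the paper moves a single one. Your explicit check that the modified counts stay in $[\pmin,\pmax]$ and that $\nval$, $\pmin$, $\pmax$ are unchanged is a slightly more careful write-up of what the paper leaves implicit.
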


\begin{proof}We have
\begin{align}
(\pmin + r_1 - 1)^2 + (\pmin + r_2 + 1)^2 
& = (\pmin + r_1)^2 + \notag\\(\pmin + r_2)^2 + 2(r_2 - r_1) + 2
& >  (\pmin + r_1)^2 + \notag\\(\pmin + r_2)^2 \label{eqn::two_terms}
\end{align}
Let $O_i$ (with $i \in [1:\nval])$ be the sizes of the $\nval$ partitions of $\pn$ elements. It means that we have $\pn = \sum_i^{\nval}O_i$ and $\psumsquares = \sum_i^{\nval}O_i^2$. 
The two terms of~\eqref{eqn::two_terms} appear in the computation of $\psumsquares$.
In other words, it is possible to remove one element from partition $1$ and add it to partition $2$ to obtain a larger sum of squares without affecting the other terms in the computation of $\psumsquares$, since the value of $\sum_i^{\nval}O_i =\pn$ remains unchanged. This proves $S$ is not maximal.  \hfill $\square$
\end{proof}

\begin{lemma}\label{lem:lemma2}
Let $\omin$ and $\omax$ be the number of partitions that have respectively the size of $\pmin$ and $\pmax$.
If $(\prmin \bmod \prange > 0)$, then there is at least one partition  whose size is strictly between $\pmin$ and $\pmax$.
\end{lemma}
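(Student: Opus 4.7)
The plan is to proceed by contradiction. Assume that no partition has a size strictly between $\pmin$ and $\pmax$; then every partition has size either $\pmin$ or $\pmax$, so the counts $\omin$ and $\omax$ satisfy
\begin{equation}
\omin + \omax = \nval \quad\text{and}\quad \omin\cdot\pmin + \omax\cdot\pmax = \pn. \notag
\end{equation}
Note that the hypothesis $\prmin \bmod \prange > 0$ implicitly requires $\prange > 0$, so the degenerate case $\pmin = \pmax$ is excluded.

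Next I would substitute $\pmax = \pmin + \prange$ into the second identity and use the first one to eliminate $\omin$. This yields
\begin{equation}
(\omin + \omax)\cdot\pmin + \omax\cdot\prange = \pn, \notag
\end{equation}
that is, $\nval\cdot\pmin + \omax\cdot\prange = \pn$, which rearranges to $\omax\cdot\prange = \pn - \nval\cdot\pmin = \prmin$.

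Since $\omax$ is a nonnegative integer, this forces $\prange$ to divide $\prmin$, so $\prmin \bmod \prange = 0$, contradicting the hypothesis. Therefore at least one partition must have a size strictly between $\pmin$ and $\pmax$. The argument is entirely arithmetic once the two counting identities are written down, so I do not anticipate any real obstacle; the only subtlety is verifying that the hypothesis legitimately excludes the $\prange = 0$ case before performing the division step.
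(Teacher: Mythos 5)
Your proof is correct and follows essentially the same route as the paper: argue by contradiction that all partitions have size $\pmin$ or $\pmax$, derive the identity $\omax\cdot\prange = \prmin$ (the paper gets it directly from $\prmin = \sum_{i=1}^{\nval}(O_i - \pmin)$, you via the two counting identities, which is the same computation), and conclude that $\prmin \bmod \prange = 0$, contradicting the hypothesis. Your remark that the hypothesis implicitly forces $\prange > 0$ is a sound extra precaution.
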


\begin{proof}By contradiction, suppose that $\prmin \bmod \prange > 0$ and all partitions are either of size $\pmin$ or $\pmax$.
By definition of $\prmin$ we have:
\begin{align}
    \prmin  = \sum_{i=1}^\nval (O_i - \pmin)
    = \sum_{i=1}^{\omax}(\pmax-\pmin) + \notag\\\sum_{i=1}^{\omin} (\pmin - \pmin)
    = \omax\cdot\prange
\end{align}
which contradicts that $\prmin \bmod \prange > 0$.
So $\prmin \bmod \prange > 0$ implies that there exists a partition whose size is strictly between $\pmin$ and $\pmax$. \hfill $\square$
\end{proof}

\begin{lemma}\label{lem:lemma3}
Let $\omin$ and $\omax$ be the number of partitions that have respectively the size of $\pmin$ and $\pmax$.
If $\psumsquares$ is maximal and $\prange > 0 \land (\prmin \bmod \prange > 0)$, then only one partition whose size is strictly between $\pmin$ and $\pmax$ exists, and its size is equal to $\pmin + \prmin \bmod \prange$.
\end{lemma}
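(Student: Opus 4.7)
The plan is to combine Lemma~\ref{lem:lemma1} and Lemma~\ref{lem:lemma2} to pin down the number of ``middle'' partitions to exactly one, and then to compute its size directly from the definition of $\prmin$.

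First, I would invoke Lemma~\ref{lem:lemma2}: since $\prange > 0$ and $\prmin \bmod \prange > 0$, at least one partition of size strictly between $\pmin$ and $\pmax$ exists. Next, I would apply the contrapositive of Lemma~\ref{lem:lemma1}: because $\psumsquares$ is maximal, no two distinct partitions can both have size strictly between $\pmin$ and $\pmax$. Combining both gives that there is \emph{exactly} one partition whose size lies strictly between $\pmin$ and $\pmax$; call its size $\pmin + r$ with $1 \le r < \prange$.

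For the second assertion, I would compute $\prmin$ by splitting the sum over the three groups of partitions (those of size $\pmin$, the unique middle one, and those of size $\pmax$). Writing $\omin + \omax + 1 = \nval$ and $\pn = \omin \cdot \pmin + (\pmin + r) + \omax \cdot \pmax$, one obtains
\begin{equation*}
\prmin \;=\; \pn - \nval \cdot \pmin \;=\; \omax \cdot (\pmax - \pmin) + r \;=\; \omax \cdot \prange + r .
\end{equation*}
Since $0 \le r < \prange$ by construction, this is precisely the Euclidean division of $\prmin$ by $\prange$, so $r = \prmin \bmod \prange$, which finishes the proof.

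I do not expect a substantial obstacle here: the two previous lemmas already supply both the existence and uniqueness of a strictly middle partition, and the size identification is a one-line arithmetic rewriting of $\prmin$ using the unique Euclidean-division representation. The only care needed is to ensure the bounds $1 \le r$ (from ``strictly between'') and $r < \prange$ (from $\pmin + r < \pmax$) are both strict so that the identification with $\prmin \bmod \prange$ is unambiguous.
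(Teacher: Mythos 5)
Your proposal is correct and follows essentially the same route as the paper's proof: combine Lemma~\ref{lem:lemma2} (existence) with Lemma~\ref{lem:lemma1} (uniqueness, via maximality of $\psumsquares$), then expand $\prmin = \omax\cdot\prange + r$ with $0 < r < \prange$ and identify $r$ with $\prmin \bmod \prange$ by uniqueness of Euclidean division.
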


\begin{proof}According to Lemma~\ref{lem:lemma1}, as $\psumsquares$ is maximal, there is at most one partition whose size is strictly between $\pmin$ and $\pmax$ and according to Lemma~\ref{lem:lemma2}, as $\prmin \bmod \prange > 0$ there is at least one partition whose size is strictly between $\pmin$ and $\pmax$. So there is only one partition  whose size is strictly between $\pmin$ and $\pmax$. So let $O_{\nval}$ be the size of that partition and $O_i, \forall i \in [1:\nval-1]$ the sizes of the remaining partitions. Let also $r^{*} = O_{\nval} -\pmin$.

Then we have $0 <r^{*} < \prange$ because $\pmin < O_{\nval} < \pmax$.
So, by definition, we have 
\begin{align}
    \prmin  = \sum_{i=1}^\nval (O_i - \pmin) = \sum_{i=1}^{\omax}(\pmax-\pmin) + \notag\\ \sum_{i=\omax+1}^{\omax+\omin} (\pmin - \pmin) +  r^{*}
     = \notag\\\omax\cdot\prange + r^{*} \mbox{ with } r^{*} < \prange \label{eqrmin}
\end{align}

According to the definition of Euclidean division, the relation \eqref{eqrmin} is equivalent to $r^{*} = \prmin \bmod \prange$. So $O_{\nval} = \pmin + \prmin \bmod \prange$.  \hfill $\square$
\end{proof}

\begin{lemma}\label{lem:lemma4}
Let $\omin$ and $\omax$ be the number of partitions that have respectively the size of $\pmin$ and $\pmax$.
If $\psumsquares$ is maximal and $\prange > 0 \land (\prmin \bmod \prange = 0)$, then there is no partition  whose size is strictly between $\pmin$ and $\pmax$.
\end{lemma}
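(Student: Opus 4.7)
The plan is to argue by contradiction, leveraging Lemma~\ref{lem:lemma1} to restrict the number of intermediate-sized partitions to at most one, and then reusing the Euclidean-division argument from the proof of Lemma~\ref{lem:lemma3} to show that a single intermediate partition is incompatible with $\prmin \bmod \prange = 0$.

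More concretely, first I would invoke Lemma~\ref{lem:lemma1}: since $\psumsquares$ is maximal, there is at most one partition whose size lies strictly between $\pmin$ and $\pmax$. Supposing for contradiction that such a partition exists, I would denote its size by $\pmin + r^{*}$ with $0 < r^{*} < \prange$, and let $O_i$ for $i \in [1:\nval-1]$ be the sizes of the remaining partitions, all equal to either $\pmin$ or $\pmax$.

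Next, I would compute $\prmin$ by splitting the sum $\sum_{i=1}^{\nval}(O_i-\pmin)$ into three contributions, exactly as in~\eqref{eqrmin}: the $\omax$ partitions of size $\pmax$ contribute $\omax\cdot\prange$, the $\omin$ partitions of size $\pmin$ contribute $0$, and the lone intermediate partition contributes $r^{*}$. This yields
\begin{equation}
\prmin = \omax\cdot\prange + r^{*}, \qquad 0 < r^{*} < \prange.
\end{equation}
By the uniqueness of Euclidean division, $\prmin \bmod \prange = r^{*} > 0$, which contradicts the hypothesis $\prmin \bmod \prange = 0$.

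I expect no real obstacle here: the lemma is essentially the complementary case of Lemma~\ref{lem:lemma3}, and the only subtle point is ensuring that the single-partition case has already been excluded by maximality (via Lemma~\ref{lem:lemma1}) before running the division argument, so that we may legitimately write the remainder as $r^{*}$ coming from a unique intermediate partition rather than aggregating several.
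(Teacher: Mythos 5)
Your proof is correct and uses essentially the same ingredients as the paper: a proof by contradiction combining Lemma~\ref{lem:lemma1} with the decomposition $\prmin = \omax\cdot\prange + \text{(excess of the intermediate partitions)}$, where each excess lies strictly between $0$ and $\prange$. The only difference is the order of combination: the paper assumes $k\geq 1$ intermediate partitions, uses divisibility of their total excess by $\prange$ to force $k\geq 2$, and then contradicts maximality via Lemma~\ref{lem:lemma1}, whereas you invoke Lemma~\ref{lem:lemma1} first to reduce to a single intermediate partition and then contradict $\prmin \bmod \prange = 0$ by uniqueness of Euclidean division, mirroring the proof of Lemma~\ref{lem:lemma3} --- a harmless reorganization of the same argument.
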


\begin{proof}By contradiction, suppose that $\psumsquares$ is maximal, $\prange > 0 \land (\prmin \bmod \prange = 0)$ and there are $k$ $(k \geq 1)$ partitions  whose size is strictly between $\pmin$ and $\pmax$ and are denoted by $I_i, \forall i \in [1:k]$. Then, by definition
\begin{align}
   \prmin  = \sum_{i=1}^{\omax}(\pmax-\pmin) +  \sum_{i=1}^{k}(I_i-\pmin) = \omax\cdot\prange + \sum_{i=1}^{k}(I_i-\pmin) \label{eqrim1}\\\mbox{ with } I_i-\pmin < \prange, \forall i \in [1:k] \label{eqrim2}
\end{align}

Because $\prmin \bmod \prange = 0$, we have $\prmin = \floor{\dfrac{\prmin}{\prange}}\cdot \prange$. So according to \eqref{eqrim1}, $\forall i \in [1:k]$ with $I_i-\pmin < \prange$, we have 
\begin{equation}
  \omax\cdot\prange + \sum_{i=1}^{k}(I_i-\pmin) = \floor{\dfrac{\prmin}{\prange}}\cdot \prange \label{eqrim3} 
\end{equation}

So according to \eqref{eqrim3}, $\prange$ is a divisor of $\sum_{i=1}^{k}(I_i-\pmin)$. Which implies that $k \geq 2$, because of \eqref{eqrim2}. And according to Lemma~\ref{lem:lemma1}, $k \geq 2$ implies that $S$ is not maximal. Which is a contradiction.  \hfill $\square$
\end{proof}

\begin{lemma}\label{lem:lemma5}
Let $\omin$ and $\omax$ be the number of partitions that have respectively the size of $\pmin$ and $\pmax$.

Then  $\floor{\dfrac{\prmin}{\prange}}$ (resp.
    $\nval -\ceil{\dfrac{\prmin}{\prange}}$) is a tight upper bound of $\omax$ (resp. $\omin$). 
\end{lemma}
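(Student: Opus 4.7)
The plan is to exploit the identity
\begin{equation*}
\prmin = \sum_{i=1}^{\nval}(O_i - \pmin),
\end{equation*}
in which every summand lies in $[0,\prange]$, equals $\prange$ exactly when $O_i = \pmax$, and equals $0$ exactly when $O_i = \pmin$. Throughout I assume $\prange > 0$ (the two formulas of the lemma are otherwise undefined, and the degenerate case $\prange = 0$ forces $\omin = \omax = \nval$).

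For the upper bound on $\omax$, I would isolate the $\omax$ summands that contribute $\prange$ and bound each remaining summand from below by $0$, yielding $\omax\cdot\prange \leq \prmin$; since $\omax$ is a nonnegative integer this gives $\omax \leq \floor{\prmin/\prange}$. For the bound on $\omin$, the symmetric reading is that the $\omin$ summands corresponding to size-$\pmin$ partitions contribute $0$, while each of the remaining $\nval - \omin$ summands contributes at most $\prange$. Hence $\prmin \leq (\nval-\omin)\cdot\prange$, and integrality of $\nval-\omin$ gives $\omin \leq \nval - \ceil{\prmin/\prange}$.

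To establish tightness I would exhibit a single partition that simultaneously achieves both bounds. Setting $q = \floor{\prmin/\prange}$ and $r = \prmin \bmod \prange$, I take $q$ blocks of size $\pmax$, one extra block of size $\pmin + r$ when $r > 0$, and fill the remaining $\nval - \ceil{\prmin/\prange}$ positions with blocks of size $\pmin$. A direct count then shows the number of blocks is $\nval$ and the sum of sizes equals $\nval\cdot\pmin + q\cdot\prange + r = \nval\cdot\pmin + \prmin = \pn$, so the construction describes a valid partition of $\pn$ elements into $\nval$ nonempty classes.

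The main obstacle will be verifying that this construction is admissible as a witness for the extremal sizes, namely that at least one block of size $\pmin$ and at least one of size $\pmax$ are actually present, i.e. $q \geq 1$ and $\nval - \ceil{\prmin/\prange} \geq 1$. Both follow from the observation that in any valid instance the values $\pmin$ and $\pmax$ are attained: some $O_i = \pmax$ forces $\prmin \geq \prange$, and some $O_i = \pmin$ forces $\prmin \leq (\nval - 1)\cdot\prange$, which together close the argument.
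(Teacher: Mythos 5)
Your proposal is correct, and the two inequality halves coincide with the paper's: the paper bounds $\omax$ and $\omin$ by splitting $\pn=\sum_i O_i$ into extremal and non-extremal classes and dividing by $\prange$, which is exactly your manipulation of $\prmin=\sum_i(O_i-\pmin)$ followed by integrality. Where you genuinely diverge is the tightness part. The paper does not build a witness; it argues that any configuration maximising $\psumsquares$ (with $\prange>0$) attains both bounds, invoking Lemmas~\ref{lem:lemma3} and~\ref{lem:lemma4} to get $\prmin=\omax\cdot\prange+(\prmin\bmod\prange)$ with at most one intermediate class, whence $\omax=\floor{\prmin/\prange}$ and $\omin=\nval-\ceil{\prmin/\prange}$ by uniqueness of Euclidean division. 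You instead construct an explicit partition with $q=\floor{\prmin/\prange}$ classes of size $\pmax$, one class of size $\pmin+r$ when $r=\prmin\bmod\prange>0$, and $\nval-\ceil{\prmin/\prange}$ classes of size $\pmin$, and you correctly close the admissibility gap ($q\geq 1$ and $\nval-\ceil{\prmin/\prange}\geq 1$) from the fact that $\pmin$ and $\pmax$ are attained in the original instance, which forces $\prange\leq\prmin\leq(\nval-1)\cdot\prange$. Your route is more elementary and self-contained, since it does not lean on Lemmas~\ref{lem:lemma1}--\ref{lem:lemma4}, and it proves tightness for every realisable feature combination; the paper's route is less constructive but delivers, as a by-product, the precise equalities $\omax=\floor{\prmin/\prange}$ and $\omin=\nval-\ceil{\prmin/\prange}$ at the $\psumsquares$-maximiser, which is the form in which Lemma~\ref{lem:lemma5} is actually consumed in the proof of Conjecture~\eqref{eqn::upper_bound_sum_squares}; with your proof, that downstream step would still need the one-line combination of Lemmas~\ref{lem:lemma3}--\ref{lem:lemma4} with Euclidean division that the paper folds into this lemma.
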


\begin{proof}We upper bound the values of $\omax$ by using the definition of $\pn$.
We have:
\begin{align}
    \pn  = \pmax\cdot\omax + \sum_{i=1}^{\nval-\omax}o_i \geq \pmax\cdot\omax + \notag\\(\nval-\omax) \cdot \pmin  \geq (\pmax - \pmin) \cdot\omax + \nval \cdot \pmin \\ 
   \Longrightarrow  \omax \leq \dfrac{n-\nval \cdot \pmin}{\pmax-\pmin}  \\
\intertext{And by definition, $\prmin = n-\nval \cdot \pmin$ and $\prange = \pmax-\pmin$. So:}
    \omax  \leq \dfrac{\prmin}{\prange}
\end{align}
Symmetrically, we upper bound $\omin$:
\begin{align}
    \pn  = \pmin\cdot\omin  + \sum_{i=1}^{\nval-\omin}o_i \leq \pmin\cdot\omin + \notag\\\pmax\cdot(\nval-\omin) = - (\pmax - \pmin) \cdot \omin + \pmax\cdot\nval\\
  \Longrightarrow  \omin \leq \dfrac{\pmax\cdot\nval - \pn}{\pmax-\pmin} = \dfrac{(\prange+\pmin)\cdot\nval-\pn}{\prange} = \notag\\ \nval + \dfrac{-(\pn - \pmin\cdot\nval)}{\prange} =  \nval -\dfrac{\prmin}{\prange} 
\end{align}
Since $\omax$, $\omin \in \mathbb{N}$, we have
\begin{equation}
    \omax \leq \floor{\dfrac{\prmin}{\prange}}\qquad \mbox{and} \qquad
    \omin \leq \nval -\ceil{\dfrac{\prmin}{\prange}}\label{I1}
\end{equation}
Finally, according to Lemmas~\ref{lem:lemma3} and~\ref{lem:lemma4}, when $\psumsquares$ is maximal, we have $\prmin=\omax\cdot\prange+\prmin \bmod \prange$ and at most one partition exists whose size is strictly between $\pmin$ and $\pmax$.
Which means that if $\psumsquares$ is maximal, we have:
\begin{align}
\omax & =  \floor{\dfrac{\prmin}{\prange}}\\\Longrightarrow
\omin & =  \nval - \omax - \alpha =\nval - \floor{\dfrac{\prmin}{\prange}} - \alpha \\
\mbox{ with } \alpha = 0 \mbox{ if } & \prmin \bmod \prange = 0 \mbox{ and } \alpha = 1 \mbox{ if not}.\\\Longrightarrow
\omin & =  \nval - \floor{\dfrac{\prmin}{\prange}} - \alpha = \nval -\ceil{\dfrac{\prmin}{\prange}}
\end{align} \hfill $\square$
\end{proof}

\subsection{Conjecture~\eqref{eqn::upper_bound_sum_squares}}
\begin{proof}[Conjecture \eqref{eqn::upper_bound_sum_squares}]
\begin{itemize}
\item When $\prange = \pmax - \pmin = 0$. The sizes of the partitions are all the same. 
Then, according to equations \eqref{eqn::first_secondary_feature_upper_bound} to \eqref{eqn::last_secondary_feature_upper_bound} we have $\mmid = \pmin, \sm = \rr = 0, \smin = \pmin^2\cdot(\nval-1)$. By substituting these $\mmid, \sm , \rr$ and $\smin$ in \eqref{eqn::upper_bound_sum_squares}, we obtain 
\begin{align}
  \psumsquares & \leq \pmin^2 + 0 + \pmin^2\cdot(\nval-1) = \pmin^2\cdot\nval
\end{align}
which is consistent with the definition of $\psumsquares$ because, as the sizes of the $\nval$ partitions are all the same, they are equal to $\pmin$. Which means that $\psumsquares = \pmin^2\cdot\nval$.
\item When $\prange > 0 \land (\prmin \bmod \prange > 0)$.
Let $\omin$ and $\omax$ be the number of partitions that have respectively the size of $\pmin$ and $\pmax$.

According to Lemmas~\ref{lem:lemma3} and~\ref{lem:lemma5}, the maximal value $\psumsquares^{*}$  of $\psumsquares$ is  

\begin{align}
\psumsquares^{*} = \pmax^2\cdot\omax + \pmin^2\cdot\omin &+ (\pmin + \prmin \bmod \prange)^2 \label{maxS}\\\mbox{ with }
\omax = \floor{\dfrac{\prmin}{\prange}} \mbox{ and }&\omin = \nval -\floor{\dfrac{\prmin}{\prange}}-1\notag
\end{align}

So because  $\prmin \bmod \prange > 0$, we have according to equations \eqref{eqn::first_secondary_feature_upper_bound} to \eqref{eqn::last_secondary_feature_upper_bound}, $\mmid = \pmin + \prmin \bmod \prange, \sm = \pmax^2 -\pmin^2, \rr = \floor{\dfrac{\prmin}{\prange}}, \smin = \pmin^2\cdot(\nval-1)$. By substituting these $\mmid, \sm , \rr$ and $\smin$ in \eqref{eqn::upper_bound_sum_squares}, we obtain $\psumsquares\leq \psumsquares^{*}$ according to \eqref{maxS}, which is consistent.

\item When $\prange > 0 \land (\prmin \bmod \prange = 0)$. Let $\omin$ and $\omax$ be the number of partitions that have respectively the size of $\pmin$ and $\pmax$.

According to Lemmas~\ref{lem:lemma4} and~\ref{lem:lemma5}, the maximal value $\psumsquares^{*}$  of $\psumsquares$ is  

\begin{align}
\psumsquares^{*} = \pmax^2\cdot\omax + \pmin^2\cdot\omin & \label{maxS1}\\\mbox{ with }
\omax = \floor{\dfrac{\prmin}{\prange}} \mbox{ and }&\omin = \nval -\floor{\dfrac{\prmin}{\prange}}\notag
\end{align}

So because  $\prmin \bmod \prange = 0$, we have  according to equations \eqref{eqn::first_secondary_feature_upper_bound} to \eqref{eqn::last_secondary_feature_upper_bound}, $\mmid = \pmin, \sm = \pmax^2 -\pmin^2, \rr = \floor{\dfrac{\prmin}{\prange}}, \smin = \pmin^2\cdot(\nval-1)$. By substituting these $\mmid, \sm , \rr$ and $\smin$  in \eqref{eqn::upper_bound_sum_squares}, we obtain $\psumsquares\leq \psumsquares^{*}$ according to \eqref{maxS1}, which is consistent. \hfill $\square$
\end{itemize}
\end{proof}

\subsection{Conjecture~\eqref{eqn::upper_bound_range1}}
\begin{proof}[Conjecture \eqref{eqn::upper_bound_range1}]
\begin{itemize}

\item If $\nval = 1$, then $\pmin = \pmax = \pn$. Thus $\prange = 0$ and  $\pn - \nval \cdot \pmin = \pn - \pmin = 0$. Which implies that $\prange \leq \pn - \nval \cdot \pmin$.

    \item If $\nval \geq 2$, then the number of values is equal to the size of the largest partition plus the size of the smallest partition plus the size of all the other partitions.

\begin{align}
\pn & = \pmax + \pmin + \sum_{i=1}^{\nval-2} O_i
\intertext{As $\pmin$ is the size of the smallest partition, we have $O_i \geq \pmin$.}
\pn & \geq \pmax + \pmin + \sum_{i=1}^{\nval-2} \pmin = \pmax + (\nval - 1) \cdot \pmin\\
\pn - \nval \cdot \pmin & \geq \pmax - \pmin
\end{align}
Using the definition $\prange = \pmax - \pmin$ we obtain $\pn - \nval \cdot \pmin \geq \prange$.

\item Tightness of the conjecture~\eqref{eqn::upper_bound_range1}: We can construct for every possible value of $\pn,\nval$ and $\pmin$ the set of partitions so that $\pn - \nval \cdot \pmin = \prange$, by setting only one of the partitions to size $\pmax = \pn - (\nval-1)\cdot\pmin$ and the rest to size $\pmin$. Because, in that case, we have:
\begin{align}
\pn & = \pmax + (\nval-1)\cdot\pmin \\
 \pn - \nval \cdot \pmin & = \pmax + (\nval-1)\cdot\pmin - \nval \cdot \pmin = \pmax - \pmin =\prange
\end{align}
\end{itemize} \hfill $\square$
\end{proof}

\subsection{Conjecture~\eqref{eqn::upper_bound_range2}}
\begin{proof}[Conjecture \eqref{eqn::upper_bound_range2}]\begin{itemize}
\item If $\nval = 1$, then $\pmin = \pmax = \pn$. Thus $\prange = 0$ and $\nval \cdot \pmax - \pn = \pmax - \pn= 0$. Which implies that $\prange \leq \nval \cdot \pmax - \pn$. And because $\pmax \geq 1$, we also have $\prange = 0$ and $0\leq \pmax-1$. Which implies that $\prange \leq \pmax-1$.
Since two quantities bound $\prange$, the smallest of them bounds $\prange$. Hence $\prange \leq \min(\nval \cdot \pmax - \pn, \pmax - 1)$.
\item If $\nval \geq 2$, then we first show $\prange \leq \nval \cdot \pmax - \pn$.
\begin{align}
\pn & = \pmax + \pmin + \sum_{i=1}^{\nval-2} O_i\\
\pn & \leq \pmax + \pmin + \sum_{i=1}^{\nval-2} \pmax = (\nval - 1) \pmax + \pmin\\
\pmax - \pmin & \leq \nval \cdot \pmax - \pn\\
\prange & \leq \nval \cdot \pmax - \pn
\end{align}
The largest range one can obtain is when one element is alone in a partition and the remaining $\pn -1$ elements are together in the 2nd partition. We have $\prange \leq \pmax - 1$.
Since $\prange$ is bounded by two quantities, it is bounded by the smallest one, hence $\prange \leq \min(\nval \cdot \pmax - \pn, \pmax - 1)$.

\item Tightness of the conjecture~\eqref{eqn::upper_bound_range2}:

For the case $\nval = 1$, we have $\pmin = \pmax = \pn$. Thus $\prange = 0$ and $\nval \cdot \pmax - \pn = \pmax - \pn= 0$. Which implies that $\prange = \nval \cdot \pmax - \pn = 0$. So we have $0 = \min(\nval \cdot \pmax - \pn, \pmax - 1)$. Which implies that $\prange = \min(\nval \cdot \pmax - \pn, \pmax - 1)$. So, the bound is tight. 

For the case of $\nval \geq 2$, we can construct for every possible value of $\pn,\nval$ and $\pmax$ the set of partitions with $\nval \cdot \pmax - \pn = \prange$ or $\pmax-1=\prange$, either by setting only one of the partitions to size $\pmin = \pn - (\nval-1)\cdot\pmax$ and the rest to size $\pmax$ if $\pn > (\nval-1)\cdot\pmax$ or either  by setting one of the partitions to size $1$ if $\pn \leq (\nval-1)\cdot\pmax$. Because  we have:

\begin{itemize}
 \item If $\pn > (\nval-1)\cdot\pmax$, then $\pn  = \pmin + (\nval-1)\cdot\pmax$.
Which implies that 
$\mbox{\hspace{0.2cm}} \nval \cdot \pmax - \pn = \nval \cdot \pmax - \pmin - (\nval-1)\cdot\pmax$.  
So \hspace{0.2cm}$\nval \cdot \pmax - \pn = \pmax - \pmin =\prange$.

 \item If $\pn \leq (\nval-1)\cdot\pmax$, we have $\nval > 2$ because if $P=2$, it implies that $\pn \leq \pmax$. This implies that $\pn = \pmax$, meaning that $P = 1$, which is inconsistent with $P=2$. So to reach the tightness, we set one partition to size $\pmax$ and another to size $1$. And because $\nval > 2$ in this case, we can set the remaining $\nval -2$ partitions to size $\floor{\dfrac{\pn-\pmax-1}{\nval -2}}$ and size $\ceil{\dfrac{\pn-\pmax-1}{\nval -2}}$. Indeed, we have $1\leq\floor{\dfrac{\pn-\pmax-1}{\nval -2}}<\pmax$. Because first, $\pn-\pmax-1$ is the remaining number of elements to partition into $\nval-2$ non-empty sets. So $\nval-2\leq \pn-\pmax-1$, which leads to $1\leq\floor{\dfrac{\pn-\pmax-1}{\nval -2}}$. And second, we also have  $\pn \leq (\nval-1)\cdot\pmax$ equivalent to  \begin{align}
    \pn-\pmax-1 & \leq (\nval-1)\cdot\pmax- \pmax-1\\
    \dfrac{\pn-\pmax-1}{\nval -2} & \leq  \dfrac{(\nval-2)\cdot\pmax - 1}{\nval -2} \leq \pmax - \dfrac{1}{\nval -2}< \pmax\\ \Rightarrow\floor{\dfrac{\pn-\pmax-1}{\nval -2}} & < \pmax
 \end{align}
\end{itemize}
\end{itemize} \hfill $\square$
\end{proof}

\section{Proofs for the Conjectures of the \group\ Constraint}\label{sec:group}

We borrow the definition of the \group\ constraint from~\cite{CheukamNgouonouGQBD25}.

\begin{definition}
The $\group([X_1,X_2,\ldots,X_\pn],\gnv,\gng,\gmin,\gmax, \notag\grange,\gsumsquares,\dmin,\dmax,\drange,$ $\dsumsquares)$
constraint is satisfied iff
\begin{itemize}[label=\textbullet]
\item $X_1,X_2,\ldots,X_\pn$ is a sequence of 0/1,
\item $\gnv$ is the number of values $1$ in the sequence,
\item $\gng$ is the number of stretches of 1s,
\item $\gmin$ (resp. $\gmax$) is the length of the smallest (resp. longest) stretch of 1s,
\item $\grange$ is the difference between the lengths of the longest and the smallest stretch,
\item $\gsumsquares$ is the sum of the squared lengths of the stretches of 1s,
\item $\dmin$ (resp. $\dmax$) is the length of the smallest (resp. longest) inter-distance of 0s,
\item $\drange$ is the difference $\dmax-\dmin$,
\item $\dsumsquares$ is the sum of the squared lengths of the inter-distances of 0s.
\end{itemize}
When there is no stretch, $\gmin\hspace*{-1pt}=\hspace*{-1pt}\gmax\hspace*{-1pt}=\hspace*{-1pt}0$; when there is no inter-distance, $\dmin\hspace*{-1pt}=\hspace*{-1pt}\dmax\hspace*{-1pt}=\hspace*{-1pt}0$.
\end{definition}

\subsection{Conjecture~\eqref{eqn::upper_bound_nv}} 
We prove the selected conjecture~\eqref{eqn::upper_bound_nv}, which was found by the Bound Seeker.

\begin{align}
    \gnv \leq \min(\gng \cdot \gmax,\gn - \gng + 1) \label{eqn::upper_bound_nv}
\end{align}

\begin{proof}[Conjecture \eqref{eqn::upper_bound_nv}]
Let $g_i$ (with $i\in [1:\gng]$) be the number of $1$ in the $i$-th stretch of 1s. We have \begin{align}
    \gnv = \sum_{i=1}^\gng g_i\leq \gng\cdot\gmax \label{binseq:equation1}
\end{align}

If $\gng = 0$ then no stretch of 1s appears in the binary sequence.
Which means that $\gnv = 0 = \min(0,\gn - \gng + 1) = \min(\gng \cdot \gmax,\gn - \gng + 1)$.

If $\gng \geq 1$ then there are $\gng - 1$ inter-distances of 0s of lengths at least equal to $1$. This means that $\gnv \leq \gn - (\gng - 1)$. Also, thanks to \eqref{binseq:equation1} we have $\gnv \leq \gng\cdot\gmax$. So $\gnv \leq \min(\gng \cdot \gmax,\gn - \gng + 1)$. \hfill $\square$
\end{proof}

\subsection{Conjecture~\eqref{eqn::lower_bound_smax}}
We prove the selected conjecture \eqref{eqn::lower_bound_smax}, which was found by the Bound Seeker.

\begin{align}
    \gmax \geq \left\lfloor\dfrac{\gn}{\gn - \gnv + 1}\right\rfloor \label{eqn::lower_bound_smax}
\end{align}

\noindent
\begin{proof}[Conjecture \eqref{eqn::lower_bound_smax}]
Because $\gng$ is the number of stretches of 1s, then $\gng - 1$ is the number of inter-distances of 0s. So the minimum number of 0s in the binary sequence is $\gng - 1$. Therefore we have \begin{align}
    \gn \geq \gnv + \gng - 1 \Longleftrightarrow  \gn -\gnv + 1 \geq  \gng 
\end{align}

\jomodif{If $\gmax > 0$}, thanks to \eqref{binseq:equation1}, we have $\gng \geq \dfrac{\gnv}{\gmax}$ which leads to  \begin{align}
    \gn -\gnv + 1 \geq  \gng \geq  \dfrac{\gnv}{\gmax} \Longrightarrow \gmax \geq \dfrac{\gnv}{\gn -\gnv + 1}\end{align}\jomodif{And because $\gmax$ is an integer, we have $\gmax \geq \left\lceil\dfrac{\gnv}{\gn -\gnv + 1}\right\rceil$. According to \cite{10.5555/562056} for the positive integers $n$ and $m$ with $m\in \mathbb{N}^{*}$, we have \begin{align}
   \left\lceil\dfrac{n}{m}\right\rceil = \left\lfloor\dfrac{n + m - 1}{m}\right\rfloor\label{eq:floor-ceil}
\end{align} So according to \eqref{eq:floor-ceil}} \begin{align}
     \gmax \geq \left\lfloor\dfrac{\gnv + \gn -\gnv + 1 - 1}{\gn -\gnv + 1}\right\rfloor = \left\lfloor\dfrac{\gn}{\gn -\gnv + 1}\right\rfloor
\end{align}

\jomodif{If $\gmax = 0$, then $\gnv = 0$. So we have \begin{align}\gmax = 0 = \left\lfloor\dfrac{\gn}{\gn + 1}\right\rfloor = \left\lfloor\dfrac{\gn}{\gn -\gnv + 1}\right\rfloor\end{align}}\hfill $\square$
\end{proof}

\subsection{Conjecture~\eqref{eqn::upper_bound_smax}}
We prove the selected conjecture \eqref{eqn::upper_bound_smax}, which was found by the Bound Seeker.

\begin{align}
    \gmax \leq \left\{\begin{array}{ll}\gn + \grange & \mbox{ if } \grange = \gn\cdot\drange\\\\
    \left\lfloor\dfrac{\gn - \grange - \drange - \min(\drange,1)-1}{\min(\drange,1) + 2}\right\rfloor + \grange& \mbox{ otherwise} \end{array}\right. \label{eqn::upper_bound_smax}
\end{align}

\begin{proof}[Conjecture \eqref{eqn::upper_bound_smax}]
\jomodif{If $\grange = \gn\cdot\drange$, by definition $\gmax \leq \gn$. And as $\grange \geq 0$, we have  $\gmax \leq \gn + \grange$.}

Otherwise, if $\grange \neq \gn\cdot\drange$, we consider the case  when $\drange = 0 $ and \jomodif{$\grange \geq 1$} as well as the case when $\drange \geq 1$:
\begin{itemize}
    \item In the case $\drange = 0 $ and \jomodif{$\grange \geq 1$}, we have $\min(\drange,1) = 0$ and $\gmax > \gmin$. This means that there are at least two stretches of $1$ and an \jomodif{inter-distance} of 0s between them in the binary sequence. So the binary sequence has at least \jomodif{two stretches of lengths $\gmin$ and $\gmax$ because $\grange \geq 1$. This also means that it has an inter-distance of length at least equal to} $1$, which means that $\gmin \leq \gn - \gmax - 1$. Then we have \begin{align}
        \gmin \leq \gn - \gmax - 1 & \iff 2\cdot\gmin \leq \gn - \gmax + \gmin - 1\\
         & \iff 2\cdot\gmin \leq \gn - \grange - 1\\
        & \implies \gmin \leq \left\lfloor\dfrac{\gn - \grange-1}{2}\right\rfloor\\
        & \implies \gmax \leq \left\lfloor\dfrac{\gn - \grange-1}{2}\right\rfloor + \grange \\
        & \iff \gmax \leq \left\lfloor\dfrac{\gn - \grange - \drange - \min(\drange,1)-1}{\min(\drange,1) + 2}\right\rfloor + \grange
    \end{align}

    \item In the case  $\drange \geq 1$, the binary sequence has at least two \jomodif{inter-distances} of 0s. This means that there are also at least three stretches of 1s. \jomodif{So we have \begin{align}\gn - \dmax - \dmin - 2\cdot\gmin-\gmax \geq 0\\\gn - (\drange + \dmin) - \dmin - 2\cdot\gmin-\gmax \geq 0\\\gn - \drange  -2\cdot\dmin - 2\cdot\gmin-\gmax \geq 0\\
    2\cdot\gmin \leq \gn - \drange - \gmax - 2\cdot\dmin\end{align} As $\dmin\geq1$, we have $-2\cdot\dmin\leq -2$. So we have \begin{align}
2\cdot\gmin \leq \gn - \drange - \gmax - 2\cdot\dmin \leq  \gn - \drange - \gmax - 2
    \end{align}}This leads to $2\cdot\gmin \leq \gn - \drange - \gmax - 2$\jomodif{, which leads to}
   \begin{align}
         3\cdot\gmin \leq \gn - \drange - \gmax + \gmin - 2
        \Longleftrightarrow 3\cdot\gmin \leq \gn - \drange - \grange - 2\\
        \Longrightarrow \gmin \leq \left\lfloor\dfrac{\gn - \drange - \grange - 2}{3}\right\rfloor\\
        \Longrightarrow \gmin \leq \left\lfloor\dfrac{\gn - \drange - \grange - 2}{3}\right\rfloor = \left\lfloor\dfrac{\gn - \grange - \drange - \min(\drange,1)-1}{\min(\drange,1) + 2}\right\rfloor\\ \Longrightarrow \gmax \leq  \left\lfloor\dfrac{\gn - \grange - \drange - \min(\drange,1)-1}{\min(\drange,1) + 2}\right\rfloor + \grange
    \end{align}
\end{itemize} \hfill $\square$
\end{proof}

\subsection{Conjecture~\eqref{eqn::upper_bound_dmin}}
We prove the selected conjecture \eqref{eqn::upper_bound_dmin}, which was found by the Bound Seeker.

\begin{align}
    \dmin \leq \left\{\begin{array}{ll}0 & \mbox{ if } \gng \leq 1\\\\
    \left\lfloor\dfrac{\gn - \gmax + 1 - \gng}{\gng - 1}\right\rfloor& \mbox{ if } \gng > 1\end{array}\right. \label{eqn::upper_bound_dmin}
\end{align}

\begin{proof}[Conjecture \eqref{eqn::upper_bound_dmin}]
If $\gng \leq 1$ then there is no \jomodif{inter-distance of 0s} between stretches of 1s. This means that $\dmin = 0$.

If $\gng > 1$ then there is a stretch of 1s that has a size equal to $\gmax$ and there are $\gng - 1$ stretches of 1s that have a size at least equal to $1$. Let $N_0$ be the number of 0s, which are between stretches of 1s. So we have $N_0 \leq \gn - \gmax - (\gng - 1)$. Let~$d_i$ (with $i \in [1:\gng - 1]$) be the number of $0$s of the $i$-th \jomodif{inter-distance} between stretches of 1s. We have \begin{align}
    N_0 = \sum_{i=1}^{\gng-1} d_i\geq (\gng -1)\cdot\dmin & \implies \dmin \leq \dfrac{N_0}{\gng-1} \leq \dfrac{\gn - \gmax - (\gng - 1)}{\gng-1}\\
    & \implies \dmin \leq \left\lfloor\dfrac{\gn - \gmax - (\gng - 1)}{\gng-1}\right\rfloor
\end{align} \hfill $\square$
\end{proof}

\subsection{Conjecture~\eqref{eqn::upper_bound_dmax}}
We prove the selected conjecture \eqref{eqn::upper_bound_dmax}, which was found by the Bound Seeker.

\begin{align}
   \dmax \leq [\gng \geq 2] \cdot (\gn - \gng \cdot \gmin - \gng + 2) \label{eqn::upper_bound_dmax}
\end{align}

\begin{proof}[Conjecture \eqref{eqn::upper_bound_dmax}]
If $\gng \leq 1$ then there is no \jomodif{inter-distance of 0s} between stretches of 1s. This means that $\dmax = 0$.

If $\gng \geq 2$ then there are $\gng$ stretches of 1s of lengths at least equal to $\gmin$. There is also an \jomodif{inter-distance of 0s} of length $\dmax$ and there are $\gng - 2$ \jomodif{inter-distances of 0s} of lengths at least equal to one. This means that $\dmax \leq \gn - \gng \cdot \gmin - (\gng - 2) = \gn - \gng \cdot \gmin - \gng + 2$.
\hfill $\square$
\end{proof}

\subsection{Conjecture~\eqref{eqn::lower_bound_ssquare}}
We prove the selected conjecture \eqref{eqn::lower_bound_ssquare}, which was found by the Bound Seeker.

\begin{align}
   \gsumsquares \geq \gmin^2\cdot\gng\label{eqn::lower_bound_ssquare}
\end{align}

\begin{proof}[Conjecture \eqref{eqn::lower_bound_ssquare}]
 \begin{align}
   \gsumsquares = \sum_{i=1}^\gng g_i^2 \geq \gmin^2\cdot\gng
\end{align} \hfill $\square$
\end{proof}

\subsection{Conjecture~\eqref{eqn::lower_bound_ssquare2}}
We prove the selected conjecture \eqref{eqn::lower_bound_ssquare2}, which was found by the Bound Seeker.

\begin{align}
   \gsumsquares \geq \grange\cdot(\grange + 1)\cdot\min(\gng,1) + \grange + \gng\label{eqn::lower_bound_ssquare2}
\end{align}

\begin{proof}[Conjecture \eqref{eqn::lower_bound_ssquare2}]
If $\gng = 0$ then $\grange = 0$, which leads to $\gsumsquares = 0$; \jomodif{hence, the property holds}.

If $\gng \geq 1$, then $\gmin \geq 1$.
 \begin{align}
  \gmin \geq 1 \Longleftrightarrow \gmin - 1 \geq 0\\
   \Longleftrightarrow \gmax\cdot(\gmin - 1) \geq 0\\
    \Longleftrightarrow \gmax\cdot(\gmin - 1) + \gmin  \geq 1\\
    \Longleftrightarrow -1 \geq -\gmax\cdot(\gmin - 1) - \gmin\\
    \mbox{\jomodif{ Multipliying the inequality  by 2, we have }}\notag\\ -2 \geq -2\cdot\gmax\cdot\gmin + 2\cdot(\gmax - \gmin)\\\mbox{\jomodif{Adding $\gmax^2 + \gmin^2 + \gng$ on each part of the inequality, we obtain }}\notag\\
     \gmax^2 + \gmin^2 + \gng - 2 \geq \gmax^2 + \gmin^2 - 2\cdot\gmax\cdot\gmin + 2\cdot(\gmax - \gmin) + \gng\\\mbox{\jomodif{Replacing $\gmax^2 + \gmin^2 - 2\cdot\gmax\cdot\gmin$ by $(\gmax - \gmin)^2$, we obtain }}\notag\\
     \gmax^2 + \gmin^2 + \gng - 2 \geq (\gmax - \gmin)^2  + 2\cdot\grange + \gng\\\mbox{\jomodif{Replacing $\gmax - \gmin$ by $\grange$, we obtain }}\notag\\
     \gmax^2 + \gmin^2 + \gng - 2 \geq \grange^2  + 2\cdot\grange + \gng  =  \grange\cdot(\grange + 1) + \grange + \gng\\\mbox{\jomodif{As  $\gng \geq 1$, we have $\min(\gng,1)=1$. This leads to }}\notag\\
      \gmax^2 + \gmin^2 + \gng - 2 \geq \grange\cdot(\grange + 1)\cdot\min(\gng,1) + \grange + \gng \label{eq:binseq2}
\end{align}\jomodif{Let $g_i$ (with $i \in [1:\gng]$) be the number of $1$ in the $i$-th stretch of 1s.
Note that with $g_{\gng-1} = \gmax$ and $g_{\gng} = \gmin$, we have $\forall i \in [1:\gng - 2], g_i \geq 1$, which leads to}\begin{align}\jomodif{\sum_{i=1}^{\gng-2} g_i^2 \geq \sum_{i=1}^{\gng-2}1 =  \gng - 2}\\\mbox{\jomodif{Adding $\gmax^2 + \gmin^2$ on each part of the inequality, we obtain }}\notag\\
\jomodif{\sum_{i=1}^\gng g_i^2 = \gmax^2 + \gmin^2 + \sum_{i=1}^{\gng-2} g_i^2 \geq \gmax^2 + \gmin^2 + \gng - 2}\\
    \Longleftrightarrow \gsumsquares = \sum_{i=1}^\gng g_i^2 = \gmax^2 + \gmin^2 + \sum_{i=1}^{\gng-2} g_i^2 \geq \gmax^2 + \gmin^2 + \gng - 2 \label{eq:binseq3}
\end{align}Finally, thanks to \eqref{eq:binseq2}  and \eqref{eq:binseq3} we have \begin{align}
   \gsumsquares \geq  \gmax^2 + \gmin^2 + \gng - 2 \geq \grange\cdot(\grange + 1)\cdot\min(\gng,1) + \grange + \gng
\end{align} \hfill $\square$
\end{proof}

\subsection{Conjecture~\eqref{eqn::lower_bound_ssquare3}}
We prove the selected conjecture \eqref{eqn::lower_bound_ssquare3}, which was found by the Bound Seeker.

\begin{align}
   \gsumsquares \geq \max(\gmax^2 + 1-[\dmin=0]-[\gmax=0],0)\label{eqn::lower_bound_ssquare3}
\end{align}

\begin{proof}[Conjecture \eqref{eqn::lower_bound_ssquare3}]
If $\dmin=0\wedge\gmax=0$, then $\gsumsquares = 0 = \max(-1,0) = \max(\gmax^2 + 1-[\dmin=0]-[\gmax=0],0)$.

If $\dmin=0\wedge\gmax\geq 1$, then there is just one stretch of 1s in the binary sequence. In that case $\gsumsquares = \gmax^2 = \gmax^2 + 1-[\dmin=0]-[\gmax=0] = \max(\gmax^2 + 1-[\dmin=0]-[\gmax=0],0)$.

If $\dmin\geq 1\wedge\gmax\geq 1$, there are at least two stretches of 1s: one of length $\gmax$ and another of length at least $1$.
Therefore, $\gsumsquares \geq \gmax^2 + 1 = \gmax^2 + 1-[\dmin=0]-[\gmax=0] = \max(\gmax^2 + 1-[\dmin=0]-[\gmax=0],0)$.
\hfill $\square$
\end{proof}

\subsection{Conjecture~\eqref{eqn::upper_bound_ssquare}}
We prove the selected conjecture \eqref{eqn::upper_bound_ssquare}, which was found by the Bound Seeker.

\begin{align}
\gsumsquares \leq \left\{\begin{array}{ll}\max(\gnv^2 + \gng - 1,0)& \mbox{if }\gng \leq 1\\\max((\gnv - \gng + 1)^2 + \gng - 1,0)& \mbox{otherwise}\end{array}\right.\label{eqn::upper_bound_ssquare}
\end{align}
To prove this conjecture, we first prove the following Theorem~\ref{the:maximisation}.

\begin{theorem}
    [maximisation of $\psumsquares = \sum_i^{\nval}y_i^2$]\label{the:maximisation}
Let $y_1,y_2,\cdots,y_{\nval}$ be non-negative integers whose sum is equal to $\pn$ and which maximise $\psumsquares=\sum_{i=1}^{\nval}y_i^2$. Then the \jomodif{largest} integer is equal to $y_1 = \pn-(P-1)$ and the $P-1$ \jomodif{remaining} integers are all equal to $1$. 
\end{theorem}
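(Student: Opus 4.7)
My plan is to adapt the swap argument used in Lemma~\ref{lem:lemma1}. Before starting, I would clarify that the statement only makes sense under the implicit hypothesis that each $y_i \geq 1$: the intended application is to lengths of stretches of 1s, which are at least one. Without such a hypothesis the maximum would be trivially attained by concentrating all mass on a single variable, giving value $\pn^2$ with the remaining $\nval-1$ values set to $0$, which contradicts the claimed form of the maximiser. I therefore treat the variables as \emph{positive} integers.

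The core step is a local exchange. Suppose that at a maximiser there exist two distinct indices $i, j$ with $y_i \geq 2$ and $y_j \geq 2$; assume $y_i \leq y_j$ without loss of generality. Replacing the pair $(y_i, y_j)$ by $(y_i - 1,\, y_j + 1)$ preserves positivity, preserves the total sum $\pn$, and changes the sum of squares by
\begin{equation*}
(y_i - 1)^2 + (y_j + 1)^2 - y_i^2 - y_j^2 \;=\; 2(y_j - y_i) + 2 \;\geq\; 2,
\end{equation*}
a strict increase that contradicts maximality. Hence at most one $y_i$ can exceed $1$ at a maximiser.

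Since the remaining $\nval - 1$ positive integers must all equal $1$, the constraint $\sum_{i=1}^{\nval} y_i = \pn$ forces the exceptional value to be $\pn - (\nval - 1)$, which matches the claim. The degenerate case $\pn = \nval$ is absorbed automatically: all values then equal $1$ and $\pn - (\nval-1) = 1$ coincides with the others.

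I do not expect any deep obstacle. The only point requiring care is the admissibility of the swap: one must justify that after decrementing $y_i$ by $1$ the result still lies in the admissible set, which is precisely why the hypothesis $y_i \geq 2$ (rather than merely $y_i \geq 1$) is needed before performing the exchange. The argument is essentially the same pigeonhole-style exchange used for Lemma~\ref{lem:lemma1}, specialised to the situation where the lower bound on each part is $1$ instead of $\pmin$.
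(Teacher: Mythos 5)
Your proof is correct, but it takes a different route from the paper's. The paper does not argue locally at a maximiser: it fixes the candidate extremal distribution, sets $S_0=(\pn-(\nval-1))^2+\nval-1$, parametrises \emph{any} other distribution as $(y_1-m,\,1+j_2,\ldots,1+j_\nval)$ with $\sum_{i=2}^{\nval}j_i=m<y_1$, and shows by direct expansion that $S=S_0+\bigl(m^2-2m y_1+2m+\sum_{i=2}^{\nval}j_i^2\bigr)$ with the bracketed correction term non-positive (using $y_1-m\geq 1$ and $m^2\geq\sum_i j_i^2$), hence $S\leq S_0$. You instead run the exchange argument of Lemma~\ref{lem:lemma1} on an arbitrary maximiser: if two parts are $\geq 2$, the move $(y_i,y_j)\mapsto(y_i-1,y_j+1)$ preserves the sum and positivity and strictly increases the sum of squares, so at most one part exceeds $1$, and the sum constraint then pins down the maximiser's shape. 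Your argument is shorter and more elementary, and it directly characterises every maximiser; the paper's computation gives in addition an explicit expression for the deficit $S_0-S$ for any competing distribution. Your preliminary remark is also well taken and applies equally to the paper: the statement's ``non-negative'' should read ``positive'' (otherwise concentrating all of $\pn$ on one variable beats the claimed optimum when $\nval\geq 2$), and the paper's own proof implicitly assumes positivity through the parametrisation $y_i=1+j_i$ with $j_i\in\mathbb{N}$, which is legitimate in the intended applications since stretch lengths and inter-distances are at least $1$.
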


\begin{proof}
Let be the distribution of values among the integers $y_i$ \jomodif{with } $\forall i \in [1:P]$ such that the maximum integer has the value $\pn-(P-1)$ and the other integers are all equal to $1$. So the sum of squares of the integers of this distribution is equal to $S_0 = (n-(P-1))^2 + P - 1$. We will now show that for any other distribution of values among the integers \jomodif{ $y_i$, we have $S \leq S_0$:}

Any distribution other than the one that gives $S_0$ can be obtained by removing $m$ occurrences of $1$ from the largest value $y_1$ of the distribution for $S_0$, and then distributing these $m$ values to the other initially equal $1$ values.
So we have $\forall i \in [2:P], j_i \in \mathbb{N}$, \begin{align}
   S = (y_1 - m)^2 + \sum_{i=2}^P (1 + j_i)^2 \mbox{ with } \sum_{i=2}^P j_i = m < y_1\\
    S = y_1^2 + m^2 - 2\cdot m\cdot y_1+ \sum_{i=2}^P(1 + j_i^2 + 2\cdot j_i)\\
    \jomodif{ S = y_1^2 + m^2 - 2\cdot m\cdot y_1+ \sum_{i=2}^P 1 + \sum_{i=2}^Pj_i^2 + 2\cdot\sum_{i=2}^P j_i}\\\jomodif{\mbox{As  }\sum_{i=2}^P 1 = P-1\mbox{ and }\sum_{i=2}^P j_i = m, \mbox{ we have :} }\notag\\
    S = y_1^2 + m^2 - 2\cdot m\cdot y_1+ P - 1 + 2\cdot m + \sum_{i=2}^P j_i^2\\\jomodif{\mbox{After rearranging each term, we obtain :}}\notag\\
     S = y_1^2 + P - 1 +  m^2 - 2\cdot m\cdot y_1 + 2\cdot m + \sum_{i=2}^P j_i^2 \\\jomodif{\mbox{As }S_0 =  y_1^2 + P - 1,\mbox{ we have :}}\notag\\
     S =  S_0 + \left(m^2 - 2\cdot m\cdot y_1 + 2\cdot m + \sum_{i=2}^P j_i^2\right)\label{eq:binseq4}
\end{align}
We now show that \jomodif{the term in the parenthesis of \eqref{eq:binseq4} is negative. That means } \begin{align}m^2 - 2\cdot m\cdot y_1 + 2\cdot m + \jomodif{\sum_{i=2}^P j_i^2} \leq 0\label{theo:part1}\end{align}

For that, \jomodif{we first express $2\cdot m\cdot y_1$ with $m^2$} : 
\begin{align}
    y_1 = y_1 - m + m \Longleftrightarrow 2\cdot m\cdot y_1 = 2\cdot m\cdot (y_1 - m + m)\notag\\
    \Longleftrightarrow 2\cdot m\cdot y_1 = 2\cdot m\cdot(y_1 -m) + 2\cdot m^2\label{theo:part2}\end{align}
  \jomodif{Then, according to \eqref{theo:part2}, we replace $2\cdot m\cdot y_1$ by $2\cdot m\cdot(y_1 -m) + 2\cdot m^2$ in the left term $m^2 - 2\cdot m\cdot y_1 + 2\cdot m + \sum_{i=2}^P j_i^2$ of the inequality \eqref{theo:part1}}. So we have \begin{align}  
    m^2 - 2\cdot m\cdot y_1 + 2\cdot m + \sum_{i=2}^P j_i^2  = 2\cdot m -2\cdot m\cdot(y_1 -m)- m^2 + \sum_{i=2}^P j_i^2\label{theo:part3}\end{align}\jomodif{Now we show that the two parts $2\cdot m -2\cdot m\cdot(y_1 -m)$ and $- m^2 + \sum_{i=2}^P j_i^2$ of \eqref{theo:part3} are negative:}
    \begin{itemize}
    \item \jomodif{For the first term, we have}
  \begin{align}  
    m < y_1 \Longrightarrow y_1 - m \geq 1 \\\Longrightarrow 2\cdot m\cdot(y_1 -m) \geq 2\cdot m \\\Longrightarrow 2\cdot m -2\cdot m\cdot(y_1 -m) \leq 0\end{align}
    \item \jomodif{For the 2nd term, we have}
  \begin{align}  
    m^2 = (\sum_{i=2}^P j_i)\cdot(\sum_{k=2}^P j_k) 
    = \sum_{i=2}^P j_i(\sum_{k=2}^P j_k) 
    = \sum_{i=2}^P j_i(j_i + \sum_{i\neq k}^P j_k) \\
    m^2 = \sum_{i=2}^P j_i^2 + \sum_{i=2}^P \sum_{i\neq k}^P j_i\cdot j_k\\\jomodif{\mbox{As }\sum_{i=2}^P \sum_{i\neq k}^P j_i\cdot j_k \geq 0 \mbox{ we have }}m^2 \geq \sum_{i=2}^P j_i^2\\\jomodif{\mbox{which leads to }- m^2 + \sum_{i=2}^P j_i^2 \leq 0}
    \label{eq:binseq6}
\end{align}
  \end{itemize}
    So thanks to $2\cdot m -2\cdot m\cdot(y_1 -m) \leq 0$  and \jomodif{$- m^2 + \sum_{i=2}^P j_i^2 \leq 0$, we have the sum of the two previous terms, which give\begin{align}
        2\cdot m -2\cdot m\cdot(y_1 -m)- m^2 + \sum_{i=2}^P j_i^2\leq 0
    \end{align}And according to equality \eqref{theo:part3}, we obtain the inequality \eqref{theo:part1} that we wanted to prove. Then, according to that inequality, we have \begin{align}S_0 + \left(m^2 - 2\cdot m\cdot y_1 + 2\cdot m + \sum_{i=2}^P j_i^2\right) \leq S_0\end{align}} So thanks to equality \eqref{eq:binseq4} we finally have \begin{align}S \leq S_0\label{eq:binseq5}\end{align} \hfill $\square$

\end{proof}

\begin{proof}[Conjecture \eqref{eqn::upper_bound_ssquare}]
If $\gng = 0$, then $\gnv = 0$. This means that $\gsumsquares = 0 = \max(\gnv^2 + \gng - 1,0)$.

If $\gng = 1$, then the binary sequence has just one stretch of 1s. This means that $\gsumsquares = \gnv^2 = \max(\gnv^2 + \gng - 1,0)$.

If $\gng > 1$, we have a binary sequence $S_0'$ where the length of the largest stretch of 1s is $ \gnv -(\gng -1)$ and the length of the other $\gng - 1$ remaining stretches of 1s is one. So according to Theorem~\ref{the:maximisation}, \jomodif{we identify $\gnv$ as $n$, $\gng$ as $P$ and the lengths of the stretches as the integers $y_i$, which lead to} $\gsumsquares\leq (\gnv - \gng + 1)^2 + \gng - 1$. \jomodif{And as $(\gnv - \gng + 1)^2 + \gng - 1\geq 0$, we have \begin{align}
    (\gnv - \gng + 1)^2 + \gng - 1 = \max((\gnv - \gng + 1)^2 + \gng - 1,0)
\end{align}This finally leads to $\gsumsquares\leq \max((\gnv - \gng + 1)^2 + \gng - 1, 0)$.}\hfill $\square$
\end{proof}

\subsection{Conjecture~\eqref{eqn::upper_bound_ssquare2} }
We prove the selected conjecture \eqref{eqn::upper_bound_ssquare2}, which was found by the Bound Seeker.

\begin{align}
\gsumsquares \leq \left\{\begin{array}{ll}\max(\gnv^2,0)& \mbox{if }\drange = 0\wedge \min(\gnv,1) = 1 \\0& \mbox{if }\drange = 0\wedge \min(\gnv,1)=0\\\max((\gnv - 2)^2 + 2,0)& \mbox{if }\drange \geq 1 \end{array}\right.\label{eqn::upper_bound_ssquare2}
\end{align}

\begin{proof}[Conjecture \eqref{eqn::upper_bound_ssquare2}]
If $\drange = 0\wedge \min(\gnv,1) = 1$, this means that $\gng \geq 1$. \jomodif{So the binary sequence has at least a stretch of 1s. In addition, as $\drange = 0$, the binary sequence has inter-distances of 0s with the same length or no inter-distance of 0s. And  when there is no inter-distance of 0s, we have $\gng=1$ and $\gsumsquares = \gnv^2$. According to Theorem~\ref{the:maximisation}, by identifying $n = \gnv$ and $\gng = P = 1$, the maximum value of the sum of squares of lengths of stretches of 1s that we can have in a  binary sequence is $S_0=(\gnv -(\gng-1))^2 +\gng - 1 = \gnv^2$  where the number of 1s in the sequence is $\gnv$.} So that binary sequence has just one stretch of 1s of length $\gnv$. This means that $\gsumsquares = \gnv^2 = \max(\gnv^2,0)$.

If $\drange = 0\wedge \min(\gnv,1) = 1$, then $\gnv = 0$. So there are no stretches of 1s in the binary sequence. This means that $\gsumsquares = 0$.

If $\drange\geq 1$, then there are at least two \jomodif{inter-distances of 0s}. This means that there are at least three stretches of 1s. According to Theorem~\ref{the:maximisation}, we have $\gsumsquares \leq (\gnv - (\gng - 1))^2 + \gng - 1$. Note that a binary sequence is a distribution of $\gnv$ values of $1$ between $\gng$ stretches of 1s. So when the number $\gng$ of stretches of 1s  decreases, it increases the lengths of these stretches and therefore the sum of squares of lengths of stretches of 1s also increases. So $(\gnv - (\gng - 1))^2 + \gng - 1$ reaches his maximum value $(\gnv - 2)^2 + 2$ when $\gng$ reaches his minimum value $3$ and we clearly find the expression of the conjecture.  \hfill $\square$

\end{proof}

\subsection{Conjecture~\eqref{eqn::lower_bound_dsquare}}
We prove the selected conjecture \eqref{eqn::lower_bound_dsquare}, which was found by the Bound Seeker.

\begin{align}
\dsumsquares \geq \dmin^2\cdot(\gng - 1)\label{eqn::lower_bound_dsquare}
\end{align}

\begin{proof}[Conjecture \eqref{eqn::lower_bound_dsquare}]
\jomodif{ If there is no inter-distance of 0s, we have $\dmin = 0$  and  $\dsumsquares = 0 = 0^2\cdot(\gng - 1) = \dmin^2\cdot(\gng - 1)$. So the conjecture holds.

Otherwise, if there is at least one inter-distance of 0s, }let $d_i$ (with $i \in [1:\gng - 1]$) be the number of 0s of the $i$-th \jomodif{inter-distance of 0s} between stretches of~1s.
\begin{align}\dsumsquares = \sum_{i=1}^{\gng-1}d_i^2 \mbox{ and }\forall i\in [1:\gng-1], d_i \geq \dmin\\\Longrightarrow \dsumsquares \geq \dmin^2\cdot(\gng - 1)\end{align} \hfill $\square$
\end{proof}

\subsection{Conjecture~\eqref{eqn::lower_bound_dsquare2}}
We prove the selected conjecture \eqref{eqn::lower_bound_dsquare2}, which was found by the Bound Seeker.

\begin{align}
\dsumsquares \geq \left\{\begin{array}{ll}0&\mbox{ if }\gng \leq 1\\\max((\drange+1)^2 + \gng -2,0)&\mbox{ otherwise}\end{array}\right.\label{eqn::lower_bound_dsquare2}
\end{align}

\begin{proof}[Conjecture \eqref{eqn::lower_bound_dsquare2}]
If $\gng \leq 1$, then there is no \jomodif{inter-distance} of 0s between stretches of 1s. So $\dsumsquares = 0$.

If $\gng \geq 2$, then there is at least one \jomodif{inter-distance of 0s} between stretches of 1s, \jomodif{and we distinguish two cases:} \begin{itemize}
    \item \jomodif{When $\gng = 2$, there is just one inter-distance. So $\dmin = \dmax$ and $\drange = 0$. The smallest value of $\dsumsquares$ is the square of the smallest length of that inter-distance. And the smallest value of the inter-distance is $1$. Then, we have \begin{align}
        \dsumsquares \geq 1^2 = 1 = \max(1,0) =\max((0+1)^2 + 2 -2,0) \\
     \Longrightarrow \dsumsquares \geq \max((0+1)^2 + 2 -2,0) = \max((\drange+1)^2 + \gng -2,0)\\
     \Longrightarrow \dsumsquares \geq \max((\drange+1)^2 + \gng -2,0)
    \end{align}So the conjecture holds for the case $\gng = 2$.}
    \item \jomodif{When $\gng > 2$,  we have at least two inter-distances: the largest and smallest inter-distances with respective lengths $\dmax$ and $\dmin$. Note that $\dmax = \drange + \dmin$ and that there are $\gng-1$ inter-distances in the binary sequence. So we have } \begin{align}\dsumsquares = \dmax^2 + \dmin^2 + \sum_{i=1}^{\gng-3}d_i^2 = (\drange + \dmin)^2 + \dmin^2 + \sum_{i=1}^{\gng-3}d_i^2\end{align}
So, for given values of $\gng$ and $\drange$, the sum of squares $\dsumsquares$ of lengths of \jomodif{inter-distances} of 0s between stretches of 1s reaches its minimum value when \jomodif{the lengths of the inter-distances are minimum. That is }$\forall i \in [1:\gng-3]\, , \dmin = d_i = 1 $. So we have\begin{align}
    \dsumsquares \geq (\drange + 1)^2 + 1 + \sum_{i=1}^{\gng-3}1 = (\drange+1)^2 + \gng -2
\end{align} 
\jomodif{And because $(\drange+1)^2 + \gng -2\geq 0$, we have \begin{align}\dsumsquares \geq (\drange+1)^2 + \gng -2= \max((\drange+1)^2 + \gng -2,0)\end{align}So finally we have \begin{align}\dsumsquares \geq \max((\drange+1)^2 + \gng -2,0)\end{align}}
\end{itemize}
\hfill $\square$
\end{proof}

\subsection{Conjecture~\eqref{eqn::lower_bound_dsquare3}}
We prove the selected conjecture \eqref{eqn::lower_bound_dsquare3}, which was found by the Bound Seeker.

\begin{align}
\dsumsquares \geq \dmax^2\label{eqn::lower_bound_dsquare3}
\end{align}

\begin{proof}[Conjecture \eqref{eqn::lower_bound_dsquare3}]\jomodif{ If there is no inter-distance of 0s, we have $\dmin = 0$  and  $\dsumsquares = 0 = 0^2 = \dmin^2$. So the conjecture holds.

If there is at least one inter-distance of 0s, then we have}
\begin{align}
    \dsumsquares = \dmax^2 + \sum_{i=1}^{\gng-2}d_i^2
\end{align}\jomodif{As $\sum_{i=1}^{\gng-2}d_i^2\geq 0$, we have }$\dsumsquares \geq \dmax^2$. 
\hfill $\square$
\end{proof}

\subsection{Conjecture~\eqref{eqn::upper_bound_dsquare}}
We prove the selected conjecture \eqref{eqn::upper_bound_dsquare}, which was found by the Bound Seeker.

\begin{align}
\dsumsquares \leq \left\{\begin{array}{ll}0& \mbox{if }\gnv \leq 1 \\(\gn -\gnv)^2& \mbox{ otherwise} \end{array}\right.\label{eqn::upper_bound_dsquare}
\end{align}

\begin{proof}[Conjecture \eqref{eqn::upper_bound_dsquare}]
If $\gnv \leq 1$, then there is at most one stretch of 1s in the binary sequence. This means that there is no \jomodif{inter-distance of 0s} between stretches of 1s. So $\dsumsquares = 0$.

If $\gnv \geq 2$, then there are $\gn - \gnv$ 0s to distribute among \jomodif{inter-distances of 0s} between stretches of 1s. And, according to \eqref{eq:binseq6}, the distribution that gives the maximum value of $\dsumsquares$ is when the binary sequence has  just one \jomodif{inter-distance of 0s} of length $\gn-\gnv$ which is between two stretches of 1s. And because we have $\gnv \geq 2$, it is possible to build two stretches of 1s. So we can conclude that $\dsumsquares \leq (\gn -\gnv)^2$. \hfill $\square$
\end{proof}

\subsection{Conjecture~\eqref{eqn::upper_bound_dsquare2}}
We prove the selected conjecture \eqref{eqn::upper_bound_dsquare2}, which was found by the Bound Seeker.

\begin{align}
\dsumsquares \leq \left\{\begin{array}{ll}\max((\gn -\gnv -(\gng-2))^2 +\gng -2,0)& \mbox{if }\gng \geq 2 \\\max(\gng - 2,0)& \mbox{ otherwise} \end{array}\right.\label{eqn::upper_bound_dsquare2}
\end{align}

\begin{proof}[Conjecture \eqref{eqn::upper_bound_dsquare2}]
If $\gng \leq 1$, then there is no \jomodif{inter-distance of 0s} between stretches of 1s in the binary sequence. So $\dsumsquares = 0 = \max(\gng - 2,0)$.

If $\gng \geq 2$, then there are $\gn - \gnv$ values of 0 to distribute among $\gng-1$ \jomodif{inter-distances of 0s} between stretches of 1s in the binary sequence. According to Theorem~\ref{the:maximisation}, the distribution that gives the maximum value of $\dsumsquares$ is the one where the largest \jomodif{inter-distance of 0s} has a length of $\gn -\gnv -(\gng-2)$ and the remaining $\gng-2$ \jomodif{inter-distances of 0s} have a length of $1$. Which means that $\dsumsquares\leq (\gn -\gnv -(\gng-2))^2 +\gng -2$. \hfill $\square$
\end{proof}

\subsection{Conjecture~\eqref{eqn::upper_bound_smax1}}
We prove the selected conjecture \eqref{eqn::upper_bound_smax1}, which was found by the Bound Seeker.

\begin{align}
\gmax \leq \left\{\begin{array}{ll}\gn& \mbox{if }\gng = 1\wedge \dmax = 0 \\ \min(\gng,1)& \mbox{if }\gng \neq 1\wedge \dmax = 0\\
\gn - \dmax -(\gng - 2) \cdot \dmin - \gng + \min(\gng,1)& \mbox{if }\gng \neq 1\wedge \dmax \geq 1\end{array}\right.\label{eqn::upper_bound_smax1}
\end{align}

\begin{proof}[Conjecture \eqref{eqn::upper_bound_smax1}]
If \jomodif{$\gng = 1\wedge \dmax = 0$}, then there is no \jomodif{inter-distance of 0s} between stretches of 1s in the binary sequence. So the maximum value of $\gmax$ is this case is $\gn$.

If $\gng \neq 1 \wedge \dmax = 0$, then $\gng = 0$. In this case $\gmax = 0 = \max(0,1) = \max(\gng,1)$.

If $\gng \neq 1\wedge \dmax \geq 1$, then $\gng \geq 2$. Which means that $\min(\gng,1) = 1$. It also means that there is a largest \jomodif{inter-distance of 0s} of length $\dmax$, and $\gng - 2$ remaining \jomodif{inter-distances of 0s} of lengths equal, at least, to $\dmin$ which are all between stretches of 1s. Also there are $\gng - 1$ stretches of 1s of lengths at least equal to $1$, and the largest stretch of $1$ of length $\gmax$. All this leads to \begin{align}
    \gn = \gmax + \sum_{i=1}^{\gng-1}g_i+\dmax + \sum_{i=1}^{\gng-2}d_i\\
    \mbox{As }g_i \geq 1 \mbox{ and }d_i\geq \dmin, \mbox{ we have }n\geq \gmax + (\gng - 1)+\dmax + (\gng - 2) \cdot \dmin\\
    \mbox{ So }\gmax \leq\gn - \dmax -(\gng - 2) \cdot \dmin - (\gng -1)\\
    \gmax \leq\gn - \dmax -(\gng - 2) \cdot \dmin - \gng + 1\\
     \gmax \leq\gn - \dmax -(\gng - 2) \cdot \dmin - \gng + \min(\gng,1)
\end{align}  \hfill $\square$
\end{proof}

\subsection{Conjecture~\eqref{eqn::upper_bound_ssquare3}}
We prove the selected conjecture \eqref{eqn::upper_bound_ssquare3}, which was found by the Bound Seeker.

\begin{align}
\gsumsquares \leq \left\{\begin{array}{ll}\max(\gn^2,0)& \mbox{if }\gng = 1\wedge \dmax = 0 \\ \max((\min(\gng,1))^2 + \gng - 1,0)& \mbox{if }\gng \neq 1\wedge \dmax = 0\\
\max((\gn - \dmax -(\gng - 2) \cdot \dmin - \gng + 1)^2 + \gng - 1,0)& \mbox{if }\gng \neq 1\wedge \dmax \geq 1\end{array}\right.\label{eqn::upper_bound_ssquare3}
\end{align}

\begin{proof}[Conjecture \eqref{eqn::upper_bound_ssquare3}]To get the maximum value of $\gsumsquares$, we need to get the maximum value of $\gmax$.
The proof of Conjecture~\eqref{eqn::upper_bound_smax1} gives the maximum value of $\gmax$, and according to Theorem~\ref{the:maximisation}, Conjecture~\eqref{eqn::upper_bound_ssquare3} is proved.
\hfill $\square$
\end{proof}

\clearpage
\bibliographystyle{splncs04}
\bibliography{mybiblio}

\begin{thebibliography}{1}
\providecommand{\url}[1]{\texttt{#1}}
\providecommand{\urlprefix}{URL }
\providecommand{\doi}[1]{https://doi.org/#1}

\bibitem{CheukamNgouonouGQBD25}
Cheukam{-}Ngouonou, J., Gindullin, R., Quimper, C., Beldiceanu, N., Douence,
  R.: Acquiring and selecting implied constraints with an application to the
  binseq and partition global constraints. In: Tack, G. (ed.) Integration of
  Constraint Programming, Artificial Intelligence, and Operations Research -
  22nd International Conference, {CPAIOR} 2025, Melbourne, VIC, Australia,
  November 10-13, 2025, Proceedings, Part {II}. Lecture Notes in Computer
  Science, vol. 15763, pp. 155--172. Springer (2025)

\bibitem{10.5555/562056}
Graham, R.L., Knuth, D.E., Patashnik, O.: Concrete Mathematics: A Foundation
  for Computer Science, Ex 3.12 page 96. Addison-Wesley Longman Publishing Co.,
  Inc., USA, 2nd edn. (1994)

\end{thebibliography}

\end{document}